\newtheorem{definition}{Definition}
\newtheorem{lemma}{Lemma}
\newtheorem{theorem}{Theorem}
\newcommand{\mrm}[1]{~\mathrm{#1}~}
\newcommand{\enumSet}[2]{\left\lbrace{#1}_1, {#1}_2 \ldots, {#1}_{#2} \right\rbrace}
\newcommand{\reals}{\mathbb{R}}
\newcommand{\integers}{\mathbb{Z}}
\newcommand*\Let[2]{\State #1 $\gets$ #2}
\algrenewcommand\algorithmicindent{1em}
\newcommand{\environment}{\mathcal{E}}
\newcommand{\envgraph}{G}
\newcommand{\traverse}{T}
\newcommand{\dzcliff}{\Delta Z_{cliff}}
\newcommand{\ksteep}{K_{steep}}
\newcommand{\cliffsFun}{\texttt{cliffs}}
\newcommand{\steepFun}{\texttt{steep}}
\newcommand{\blockSurface}{s}
\newcommand{\fwedge}{wedge_f}
\newcommand{\bwedge}{wedge_b}
\newcommand{\lblock}{L_{B}}
\newcommand{\structure}{T}
\newcommand{\strVector}{\mathbf{\hat{u}}}
\newcommand{\strCells}{C}
\newcommand{\cell}{c}
\newcommand{\cellCorners}{\mathbf{\lambda}}
\newcommand{\height}{h}
\newcommand{\terminator}{t}
\newcommand{\cost}{\texttt{cost}}
\newcommand{\regions}{R}
\newcommand{\region}{r}
\newcommand{\structures}{\mathcal{T}}
\newcommand{\regionGraph}{G_R}
\newcommand{\buildpoints}{B}
\newcommand{\buildpoint}{\mathbf{b}}
\newcommand{\waterfall}{\operatorname{\textsc{Waterfall}}}
\newcommand{\cellmin}{h_{min}}
\newcommand{\cellmax}{h_{max}}
\newcommand{\mst}{M}
\newcommand{\bbmst}{\operatorname{\textsc{BB-MST}}}
\newcommand{\solution}[2]{\mst^*_{#1, #2}}
\newcommand{\powerset}{\mathcal{P}}
\newcommand{\exTableChair}{(A)}
\newcommand{\exPushedIn}{(B)}
\newcommand{\exCheckerboard}{(C)}
\newcommand{\exOffice}{(D)}
\newcommand{\exStairs}{(E)}
\begin{document}

\title{\vspace{-0.8cm}Optimal Structure Synthesis for Environment Augmenting Robots}

% You will get a Paper-ID when submitting a pdf file to the conference system
% \author{Tarik Tosun, Cynthia Sung, and Mark Yim}
% \institute{
%   \email{tarikt@grasp.upenn.edu}, \email{crsung@seas.upenn.edu}, and \email{yim@seas.upenn.edu}\\
%   Department of Mechanical Engineering and Applied Mechanics\\
%   University of Pennsylvania, Philadelphia PA}
\author{
Tarik~Tosun, Cynthia Sung, Colin McCloskey, and Mark Yim%
  \thanks{Tosun, Sung, and Yim: University of Pennsylvania, Philadelphia PA}%
  \thanks{McCloskey: Yale University, New Haven CT}%
  \thanks{\texttt{\{tarikt, crsung, yim\}@seas.upenn.edu}}%
  \thanks{\texttt{colin.mccloskey@yale.edu}}%
  \thanks{Work funded by NSF grant numbers CNS-1329620 and CNS-1329692}%
}
% \begin{tabular}{c c}
% Department of Mechanical Engineering and Applied Mechanics & $^\dagger$Department of Computer Science \\
% University of Pennsylvania, Philadelphia PA & Yale University, New Haven CT \\
% \texttt{\{tarikt, crsung, yim\}@seas.upenn.edu} & \texttt{colin.mccloskey@yale.edu}
% \end{tabular}
\maketitle%
\begin{abstract}
Building structures can allow a robot to surmount large obstacles, expanding the set of areas it can reach.
This paper presents a planning algorithm to automatically determine what structures a construction-capable robot must build in order to traverse its entire environment. 
%This paper presents an approach to environment augmentation that allows a robot to overcome obstacles and traverse the entire space. % algorithm for a robot to decide how to build structures that allow it to overcome obstacles in an environment.
Given an environment, a set of building blocks, and a robot capable of building structures, we seek a optimal set of structures (using a minimum number of building blocks) that could be built to make the entire environment traversable with respect to the robot's movement capabilities.  
We show that this problem is NP-Hard, and present a complete, optimal algorithm that solves it using a branch-and-bound strategy.  The algorithm runs in exponential time in the worst case, but solves typical problems with practical speed.  In hardware experiments, we show that the algorithm solves  3D maps of real indoor environments in about one minute, and that the structures selected by the algorithm allow a robot to traverse the entire environment. An accompanying video is available online at \url{https://youtu.be/B9WM557NP44}.
\end{abstract}

\section{Introduction}
\label{sec:introduction}
%
%Robotics research typically strives to create \textit{reactive} systems, capable of sensing, thinking, and acting in response to the constraints of their environment.  An alternative approach would seek to make robots \textit{proactive}, by allowing them to actively augment their environments in ways that enhance their ability to move through it. \cs{Is this not also reactive?}
Augmenting the environment to make a task easier is a familiar human experience: to reach objects on a shelf, we use a stepstool, and at a larger scale, we construct bridges to cross wide rivers.
In contrast most robotic planning strategies  search for behaviors the robot can execute  within the constraints of its environment, without considering how those constraints could be changed.
Well-established work in construction robotics provides evidence that robots have the ability to build large structures \cite{petersen2011termes, Werfel2007, Seo2013}, indicating that they should, similarly to humans, be able to augment their environment for simpler task execution.

 Consider a scenario where a small robot must move through an environment filled with objects larger than itself. 
The robot is unable to cross over large objects or gaps. As a result, entire portions of the environment may be  inaccessible. 
The planning problem can be posed: Given an environment, a robot, and a supply of building blocks, can we find a set of structures that could be added to the environment to make it fully accessible to the robot (i.e. there exists a navigable path between any pair of points)?  Furthermore, since structure-building is a time-consuming process, can we find such a set of structures which uses a minimum number of building blocks?  We refer to this as the \emph{optimal structure synthesis} problem.

This paper presents a mathematical formalism for optimal structure synthesis,  shows that the problem is NP-Hard, and   presents a complete, optimal algorithm that will find a minimum-cost solution to any instance of the problem, if a solution exists.  The algorithm solves practical problems efficiently using a branch-and-bound strategy, typically exploring a tiny fraction of the exponentially-large solution space before finding the optimal solution.  In our experiments, we show that the algorithm finds optimal solutions in about one minute for 3D maps of real indoor environments, and demonstrate that the structures selected by the algorithm do indeed allow the robot to access every region of the environments.
%
%The remainder of the paper is structured as follows.  Section \ref{sec:related_work} discusses related work.  Section \ref{sec:formulation} provides preliminaries, formulates the optimal synthesis problem, and shows that it is NP-Hard.  Sections \ref{sec:waterfall}-\ref{sec:bbmst} presents our algorithm. Section \ref{sec:results} shows solutions discovered by the algorithm in some demonstration environments (modelled in CAD), and discusses runtime performance.  Section \ref{sec:discussion} discusses our results and avenues of future work, and Section \ref{sec:conclusion} concludes.
%
\section{Related Work}
\label{sec:related_work}
The fields of collective construction robotics and modular robotics offer examples of systems that build and traverse structures.  Petersen et al. present Termes \cite{petersen2011termes}, a collective construction robot system that creates structures using blocks co-designed with the robot. Werfel et al. present algorithms for environmentally-adaptive construction: a team of robots senses obstacles and builds around them, modifying the goal structure if needed to leave room for immovable obstacles \cite{Werfel2007}. Other work allows robots to adaptively reinforce the structure they are building to bear the load imposed on it \cite{melenbrink2018local}.  Terada and Murata \cite{terada2004automatic} present a lattice-style modular system with two parts, structure modules and an assembler robot. Like many lattice-style modular systems, the assembler robot can only move on the structure modules, and not in an unstructured environment. M-blocks \cite{romanishin20153d} form structures out of robot cubes which rotate over the structure, and can reconfigure between arbitrary 3D shapes, except those containing certain inadmissible sub-configurations \cite{sung2015reconfiguration}. 
Other related work in manipulation planning allows robots to carry out multi-step procedures to assemble furniture \cite{knepper2013ikeabot} or rearrange clutter surrounding a primary manipulation task \cite{dogar2012planning}.
 
There is also some work showing that robots can deploy structures to enhance their ability to move through an environment.  Napp et al. present a distributed algorithm for adaptive ramp building with amorphous materials \cite{napp2014distributed}.  Using local information, the algorithm controls one or more  robots to deposit amorphous material (like foam or sandbags) on their environment to make a goal point accessible \cite{saboia2018autonomous}.  Our work addresses a similar, complementary problem: assuming a map of the entire environment, we generate a {\em globally optimal} set of structures making {\em every} point accessible.

We build on prior work with the SMORES-EP robot (Figure \ref{fig:env-aug}), in which the robot autonomously built and deployed structures to overcome obstacles while completing tasks \cite{TT:JD:GJ:HK:MC:MY:icra:18}. This prior system relied on a human-made library  of structures: for example, the library entry ``\texttt{height-2 ramp}'' would have an associated classifier to identify environments of type ``\texttt{height-2 ledge},'' indicating that such a ledge can be made traversable by adding the ramp.  In contrast, our current paper provides a general formalism to decide what structure the robot should build, and where it should build it.%, to make an arbitrary environment traversable.
\begin{figure}
\centering
\includegraphics[width=0.32\columnwidth]{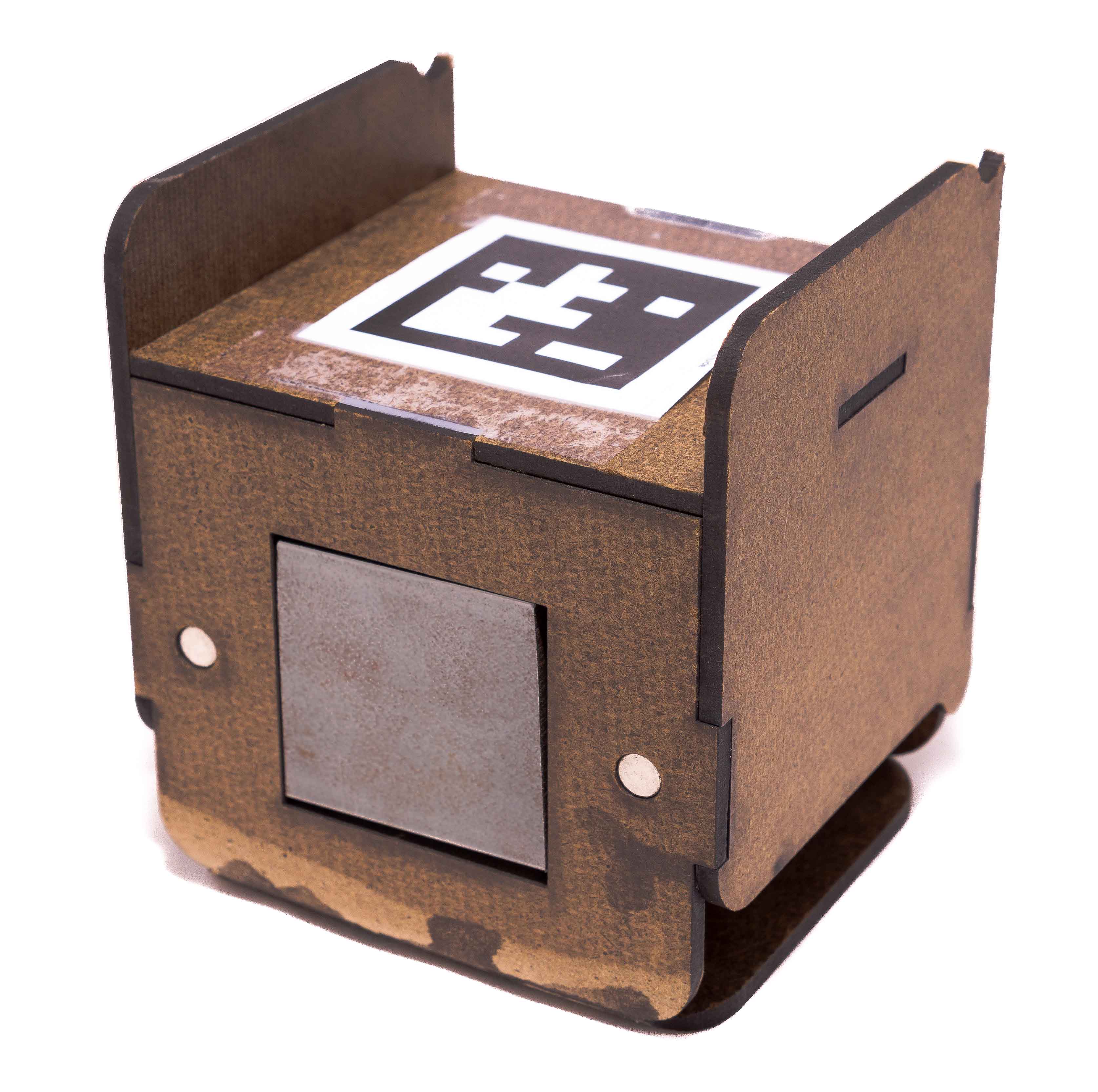}
\includegraphics[width=0.32\columnwidth]{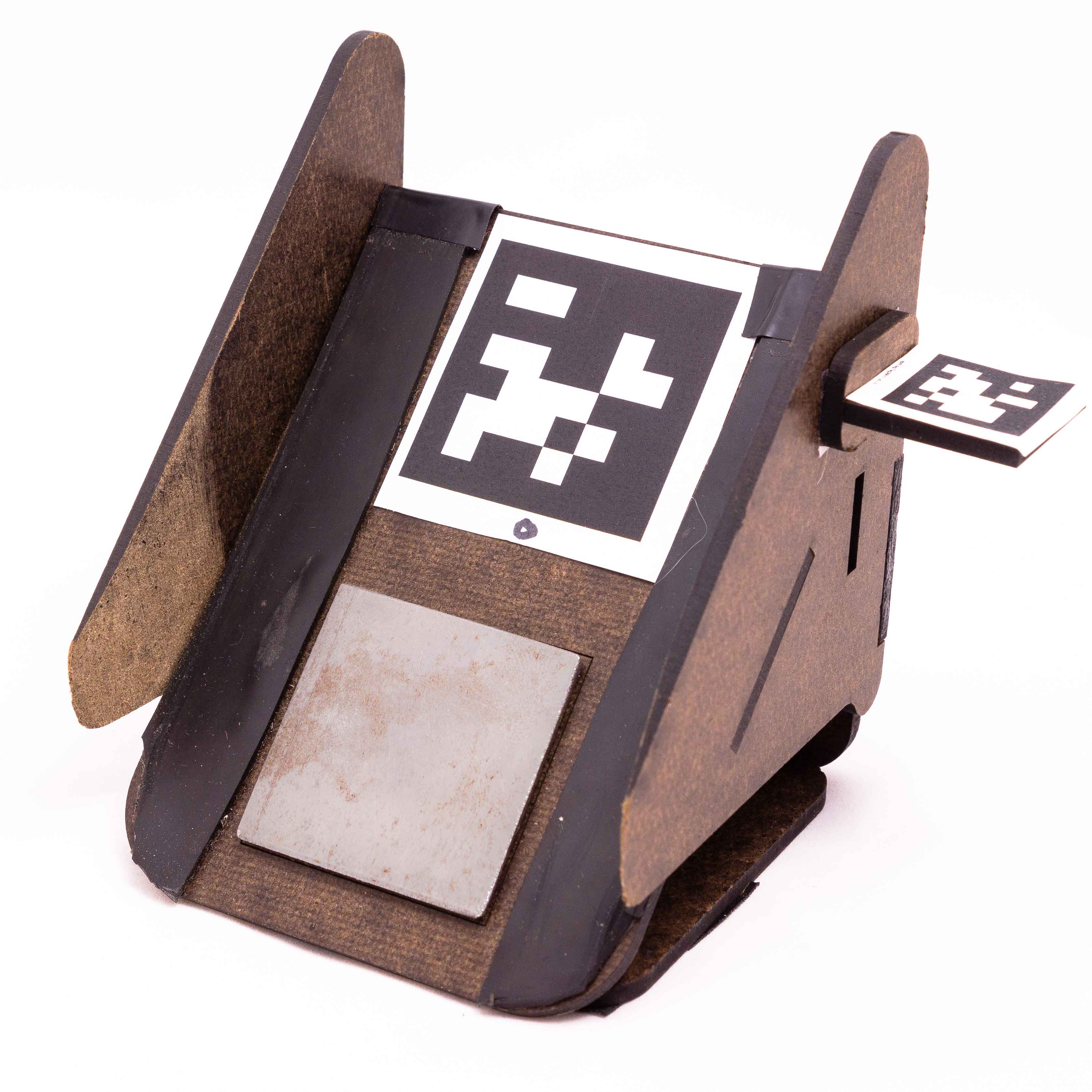}
\includegraphics[width=0.32\columnwidth]{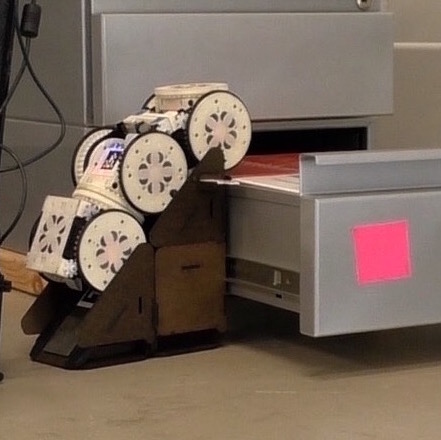}
\caption{The SMORES-EP robot can use the block and wedge building blocks shown here to build structures that make it possible to surmount obstacles \cite{TT:JD:GJ:HK:MC:MY:icra:18}.}
\label{fig:env-aug}
\vspace{-0.5cm}
\end{figure}
\section{Problem Formulation}
\label{sec:formulation}
\subsection{Problem Summary}
Given an \textit{environment}, a \textit{robot}, and a set of \textit{building blocks}, we seek a set of \textit{structures}, made of building blocks, that could be added to the environment to make it fully accessible by the robot.  We present an algorithm that is guaranteed to find a globally optimal solution to this discrete, combinatorial optimization problem. The solution is optimal in the sense that it uses a minimum number of building blocks.

The \textit{environment} is a discretized height map -  a 2d x-y grid with a single height Z at each cell (Fig.~\ref{fig:problem-overview}). The \textit{robot} moves over the surface of the environment.  We abstract the motion of the robot as discrete transitions along the edges of the graph representing the environment. The movement capabilities of the robot are modeled as the \textit{traversability criteria}, which quantify the largest obstacles and steepest slopes the robot can move over, and consequently determine whether the robot can occupy a node or move along an edge in the environment graph. These criteria are detailed in Definitions~\ref{def:non-traversable-edge} and~\ref{def:non-traversable-node}. The framework is general to any robot whose motion can be modeled using these criteria.

\textit{Building blocks} can be assembled by the robot to form \textit{structures}. Our goal is to determine the shapes and locations of a set of structures that could be added to the environment to make it fully traversable. Section~\ref{sec:structures} provides a formal definition for structures made of building blocks, and describes the constraints that valid structures must obey.

Our algorithm proceeds in two stages.  First, the Waterfall algorithm (Section~\ref{sec:waterfall}) finds the set of min-cost structures connecting each pair of regions in the environment. Among these, the BB-MST algorithm (Section~\ref{sec:bbmst}) then finds a min-cost tree of structures that spans all regions, while obeying the constraints for structure validity.
Together, Waterfall and BB-MST generate the minimum-cost set of structures that makes the environment fully traversable (Theorem~\ref{thm:main-result}).
\begin{figure}
\begin{center}
 \includegraphics[width=0.32\columnwidth]{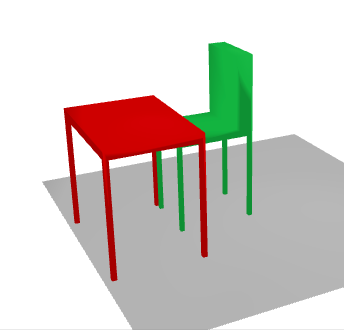}
 \includegraphics[width=0.32\columnwidth]{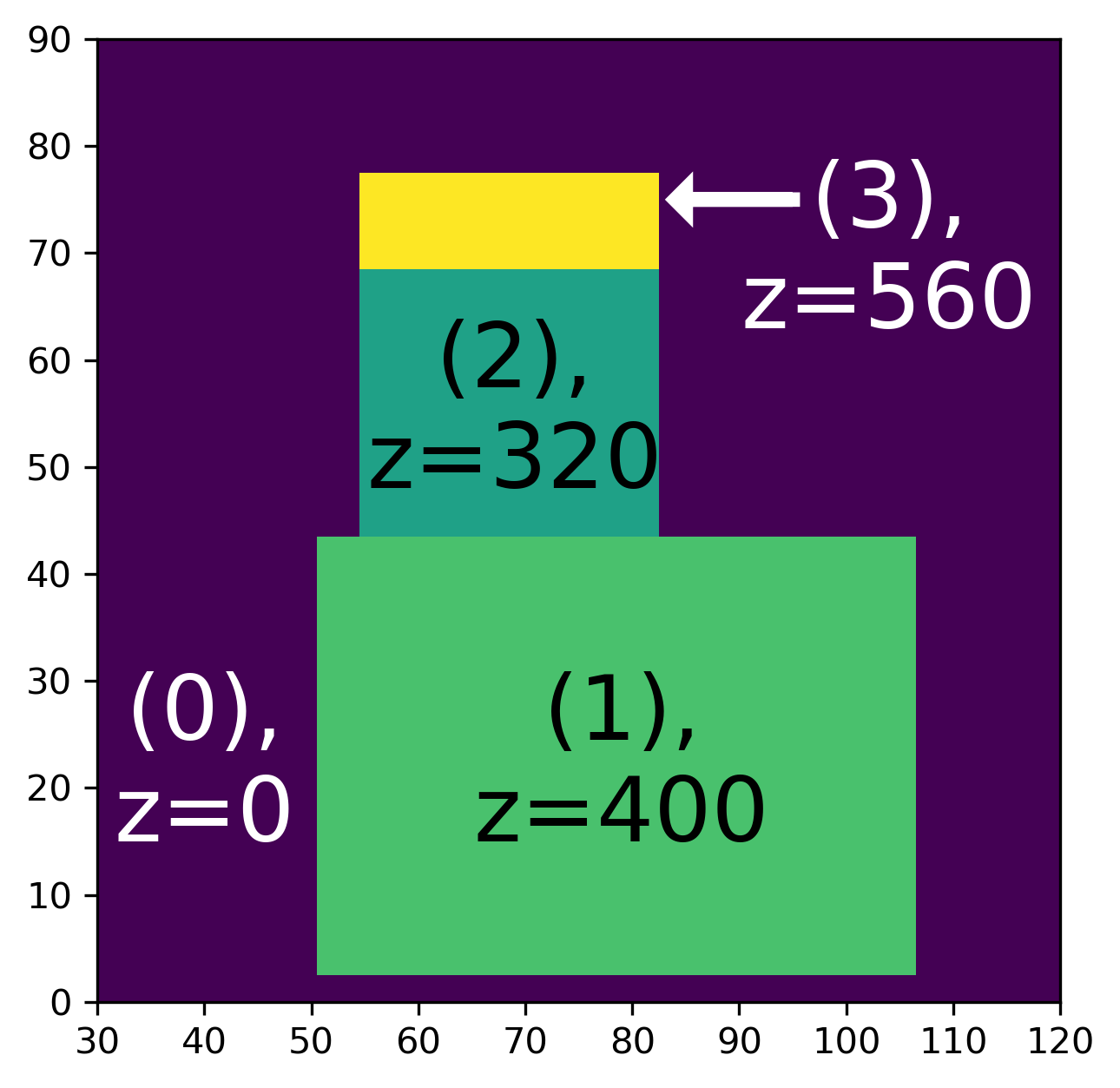}
 \includegraphics[width=0.32\columnwidth]{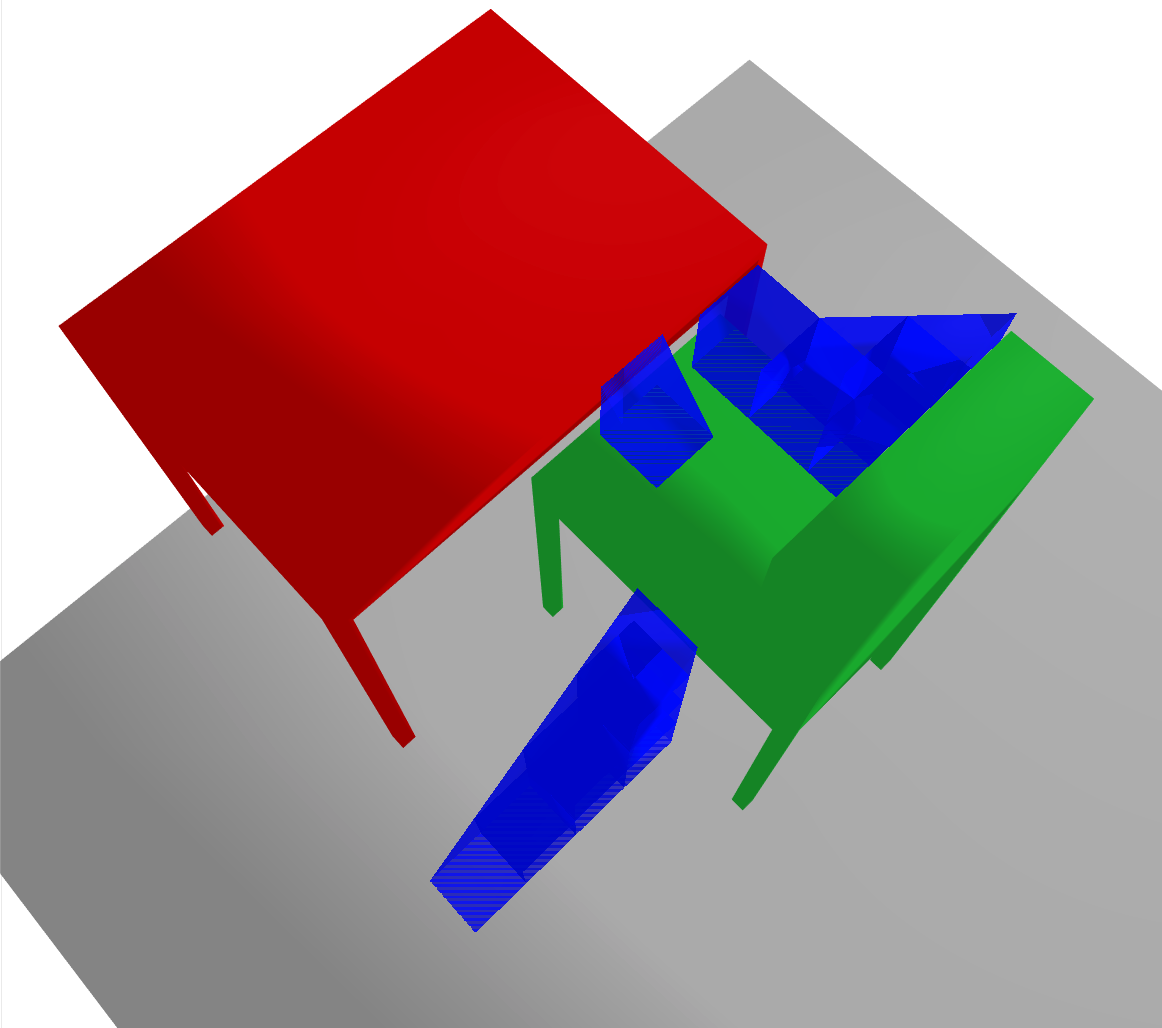}
\end{center}
 \caption{\textbf{Left:} Example environment (table and chair).  \textbf{Center:} Height field representation (top down view), with labelled regions and heights. \textbf{Right:} Optimal solution generated by our algorithm, which makes all four regions accessible.  Structures are blue.}
\label{fig:problem-overview}
\vspace{-0.5cm}
\end{figure}

\subsection{Preliminaries}

%Given an \emph{environment}, a \emph{robot}, and a set of \emph{building blocks}, we wish to find the minimum-cost set of structures which could be added to the environment to make the whole area \emph{traversable} by the robot.

Consider a ground robot traversing 
 a discretized height-map environment $\environment:\integers^2\to\reals^+$. %, and a ground robot traversing this environment. 
The environment can be represented as a grid graph $\envgraph(V,E)$, where $V$ is a set of nodes corresponding to each of the grid cells, and edges $E\subseteq V\times V$ connect neighboring cells.
The robot's movements are treated as discrete transitions along edges.
%The goal is for to robot to be able to access every location on $\environment$.
The robot is subject to certain physical limitations,  which determine its ability to  traverse this graph, defined for both nodes and edges.

The first criterion is on edges, or transitions between nodes. A ground robot is typically not able to traverse a sharp rise or dropoff (or ``\emph{cliff}''). Edges across which the difference in height is above a threshold $\Delta Z_{cliff}$ is thus non-traversable.

\begin{definition}[Non-Traversable Edge] \label{def:non-traversable-edge}
An edge $e=(v_i, v_j)$ is non-traversable if $|\environment(v_i) - \environment(v_j) | > \dzcliff$. We refer to these edges as cliffs.
\end{definition}

The second criterion is on the slope of the environment. %In some cases, a ground robot will not be able to traverse slopes that are too steep. %Therefore, the local gradient of the environment surface is too high for the robot to move.  
We find slopes too ``\emph{steep}''  by identifying nodes at which the average gradient (within a window of radius $d$ about the node) is higher than a threshold $\ksteep$.  To prevent cliffs from influencing the apparent steepness of nearby nodes (due to the windowed average), we remove the contribution of cliffs from the gradient before identifying steep nodes. Let:
\begin{eqnarray}
\cliffsFun(\environment(v)) =&
  \begin{cases}
    |\nabla \environment(\vec{v})| &\mrm{if} |\nabla \environment(v)| > \dzcliff \\
    0 &\mrm{otherwise} 
  \end{cases} \\
\steepFun(\environment(v)) =& \underset{d}{ma}\left(|\nabla \environment(v)| - \cliffsFun(\environment(v)\right)
\end{eqnarray}
where $\underset{d}{ma}$ denotes a windowed-moving-average with window-size $d$. In our implementation, $d=8$~cm.

\begin{definition}[Non-Traversable Node] \label{def:non-traversable-node}
A node $v \in V$ is non-traversable if $\steepFun(\environment(v)) > \ksteep$.
\end{definition}
For the robot made of SMORES-EP modules used in our implementation, $\dzcliff=4$~cm and $\ksteep=1$.  These values were experimentally determined by testing the robot's ability to drive over sloped surfaces and step obstacles.% by repeatedly testing the robot's ability to drive over obstacles and sloped surfaces. 

We define the \emph{traversable environment} to be $\envgraph_\traverse$, the subgraph of $\envgraph$ containing only traversable nodes and edges.
If $\envgraph_\traverse$ is connected, then the robot is able to reach every traversable grid cell.
However, it is possible that $\envgraph_\traverse$ is not connected. In this case, the environment is split into \emph{regions}, which are the connected components of $\envgraph_\traverse$.

\begin{definition}[Region]
  A region $\region_i \in \regions$ is a connected component in $\envgraph_\traverse(V,E)$.  By definition, a path exists between any two nodes in a single region, and no path exists between any two nodes in different regions.
\end{definition}

Fig.~\ref{fig:problem-overview} shows an environment with a table and chair (left), and  the corresponding height map $\environment$ (center). Different colors on this map correspond to different regions.

\subsection{Structures}
\label{sec:structures}
%$$
Structures are composed of building blocks, which the robot is able to carry, place, and drive over, and can be added to the environment can  allow the robot to move between
regions. Our goal is to select the shapes, positions, and orientations of structures that could be added to the environment to make it globally traversable.  We do not attempt to determine what actions the robot could take to actually build the structure - to do so, we could use an existing algorithm for robotic assembly planning (See Section \ref{sec:related_work}). 

A \textit{structure} is a line of columns of stacked \textit{block} and \textit{wedge} building blocks, shown in Figure \ref{fig:env-aug}.  Building blocks have a square footprint with side length $\lblock$; for our building blocks, $\lblock=8cm$. Importantly, we assume the robot can only move over the structure along a straight line - this is a physical constraint imposed by the building blocks, which have side walls.
We identify building blocks with type labels $t \in \lbrace block, \fwedge, \bwedge \rbrace$, with ``f'' and ``b'' denoting the two possible orientations (forward and backward) of wedges with respect to the structure. For each type, we define a surface function $\blockSurface: [0, \lblock] \to \reals^+$ describing its shape: \mbox{$block:~s(x)=\lblock$}, \mbox{$\fwedge:~s(x)=x$}, and \mbox{$\bwedge:~s(x)=\lblock-x$}.

Viewed from above, the footprint of a structure in the $xy$ plane is a linear array of $n$ contiguous squares called \textit{structure cells} (Fig.~\ref{fig:example-environment}, left); sitting atop each structure cell is a column of stacked blocks which comprise the structure.  Structures are not locked to the grid - rather, the build point $\buildpoint$ defines the position of the first structure cell in the plane, and the orientation vector $\strVector$ defines the line along which all cells lie, and the direction along which the robot may move.
\begin{definition}[Structure]
  A structure $\structure=\left\langle \buildpoint, \strVector, \strCells \right\rangle$
  has build point \mbox{$\buildpoint \in \mathbb{R}^2$},
  orientation vector $\strVector \in \mathbb{R}^2: |\strVector|=1$,
  and cells $\strCells = \lbrace \cell_1 \ldots \cell_n \rbrace$.
  Each cell \mbox{$\cell_i = \left\langle \cellCorners_i, \height_i, \terminator_i  \right \rangle, ~i \in \lbrace 1 \ldots n \rbrace$} has
  corners $\cellCorners_i \in \mathbb{R}^{4\times2}$, 
  column height $h_i \in \mathbb{Z}^+$,
  and surface block type $t_i$.
\end{definition}
The \textit{height} and \textit{surface block type} of a cell respectively specify the number of blocks stacked on that cell and the type of building block at the top of the stack. %, which determines the shape of the structure surface over which the robot will drive.
To fully define a structure, it is sufficient to provide $\buildpoint$, $\strVector$, and lists of cell heights $\enumSet{h}{n}$ and surface block types $\enumSet{t}{n}$; assuming $\lblock$ is known, cell corners $\cellCorners$ can be computed from $\buildpoint$ and $\strVector$. The \textit{cost} of a structure is the number of blocks
it contains, $\cost(\structure) = \sum_{i = 1 \ldots n} \height_i$.

For a structure to be considered valid, the environment surface between each of its cells must be suitable to support a column of blocks.  %To test suitability of the surface, we evaluate the \textit{buildability} of each of its structure cells.
For our structures, we define \textit{buildability} in terms of the flatness of the underlying environment surface.  Letting \mbox{$\environment(\cell_i) = \texttt{median}(\environment(v)) ~\forall~ v \in \cell_i$}, cell $\cell_i$ is buildable iff
\mbox{$\environment(v) - \environment(\cell_i) < \alpha \lblock~\forall~v \in \cell_i$}.  In our implementation, $\alpha=0.4$.

For a structure to be useful, it must be both buildable and traversable.  Since each block is individually traversable, we determine traversability of a structure by evaluating the cliff condition between neighboring cells $\cell_i$ and $\cell_{i+1}$:
\begin{eqnarray}
  \label{eqn:Zi}
  Z_i =& \environment(\cell_i) + \height_i * \lblock + \blockSurface_i(\lblock)\\
  \label{eqn:Zip1}
  Z_{i+1} =& \environment(\cell_{i+1}) + \height_{i+1} * \lblock + \blockSurface_{i+1}(0) \\
  \label{eqn:traversable}
  |Z_{i+1}-Z_i| <& \dzcliff
\end{eqnarray}
The boundary is traversable if Equation~\ref{eqn:traversable} is satisfied.

In addition to moving between blocks on the structure, the robot must also be able to transition from the structure to the ground surface at both ends, so we require the first and last cell of traversable structures to have zero height
%(i.e. to simply be the environment surface),
and exempt them from the buildability condition. 
Additionally, when moving to the first or last structure cell involves crossing a region boundary (e.g. moving from the structure to the top of a cliff), the region boundary must be flat, and its surface normal must align with the structure orientation $\strVector, $ so that structure presses flat up against the cliff (Fig~\ref{fig:example-environment}). 
\begin{definition}[Valid Structure]
\label{def:valid-structure}
  A structure $\structure$ is valid if: $~\forall~ \cell_i, \cell_{i+1} \in \strCells$, $(\cell_i, \cell_{i+1})$ is traversable, and $\forall \cell_i \in \{\cell_2 \ldots \cell_{n-1}\}$, $\cell_i$ is buildable.%, and (3) whenever a pair of adjacent cells lie in different regions and one of them has zero height, the region boundary must be flat and its surface normal must align with $\strVector$.
%\vspace{-0.4cm}
\end{definition}
\subsection{Conflicts}
\label{sec:conflicts}
%
%\vspace{-0.1cm}
Introducing a valid structure whose endpoint cells are in different regions creates a path between them.
Our goal, then, is to synthesize a set of structures $\structures = \enumSet{\structure}{k}$ on a world graph with regions $\regions$ such that there is a path between every pair of regions.
Because structures occupy physical space, certain combinations of them can  result in\ two kinds of \emph{conflicts}: (1) collision between two structures and (2) splitting regions with structures. %. A pair of structures conflict if they collide. A set of structures conflicts with a region if they divide the region into disconnected pieces.%, even if each structure is individually valid.
%
% \begin{definition}[Structure-Structure Conflict]
%   Consider a pair of structures $\structure_i, \structure_j \in \structures$.  These two structures conflict if any of their cells intersect:
%   $$
%   \exists~ \cell_n \in \structure_i, \cell_m \in \structure_j ~:~ \cell_n \cap \cell_m \ne 0
%   $$
% \end{definition}
\begin{definition}[Structure-Structure Conflict]
A pair of structures conflict if any of their cells intersect.
\end{definition}
\begin{definition}[Region-Structure Conflict]
   Consider region $\region_c$ and set of structures $\structures_c =\enumSet{\structure}{k}$. Let $V_{\region_c}$ be the set of nodes in region $\region_c$.
Let $V_{\structures_c}$ 
%$V_{\structures_c} = \bigcup_{\structure \in \structures_c}\bigcup_{\cell_i \in \structure.C} v \in \cell_i$
be the set of nodes in the cells of $\structures_c$.  Let $G_{\region_c}$ be the subgraph of $G(V,E)$ induced by $V_{\region_c} \setminus V_{\structures_c}$. There is a region-structure conflict between $\region_c$ and $\structures_c$ if $G_{\region_c}$ is disconnected.
%   $$
%   \exists~ v_i, v_j \in G_{\region_c} \setminus V_{\structures_c} ~:~ v_i \leadsto v_j \notin G_{\region_c} \setminus V_{\structures_c}
%   $$
\end{definition}
Structure-structure conflicts create physically impossible conditions and are thus never allowed. Region-structure conflict occur when one or more structures  split a region into multiple disconnected pieces, requiring each piece to be treated as a new region. 
%Structure-structure conflicts result from collisions between structures% and represent a situation that is physically impossible, and a set $\structures$ which contains a structure-structure conflict is considered invalid. Region-structure conflicts occur when a set of structures $\structures_c$ cut through a region $\region_c$ in such a way that it is no longer fully connected.  Consequently, $\region_c$ needs to be treated as multiple regions when $\structures_c$ are present.
%
\subsection{Problem Statement}
Consider an input environment $\environment$, represented as grid-graph $G(V,E)$, with regions $R$. %, as described in the previous section. 
Our objective is to find a min-cost, conflict-free set of structures which make the entire environment traversable.
Let $\regionGraph(\regions, \structures)$ be a graph in which nodes represent regions and edges represent valid structures connecting regions.  We seek a set of structures $\structures^*= \enumSet{\structure}{k}$ such that $\regionGraph(R,\structures^*)$ is fully connected and conflict-free, and $\cost(\structures^*)=\sum_{\structure_i \in \structures^*} \cost(\structure_i)$ is minimized.
%

%\subsection{Approach Summary}
%
%Our approach consists of two major steps: (1) generating the set $\structures$ of all potentially-useful structures, and (2) %forming graph $G_{\regions}(\regions, \structures)$, and 
%identifying a subset of structures $\structures^* \subseteq \structures$ which fully connect the environment. %form a conflict-free minimum-spanning tree.
%
\begin{figure}
 \includegraphics[width=0.4\columnwidth]{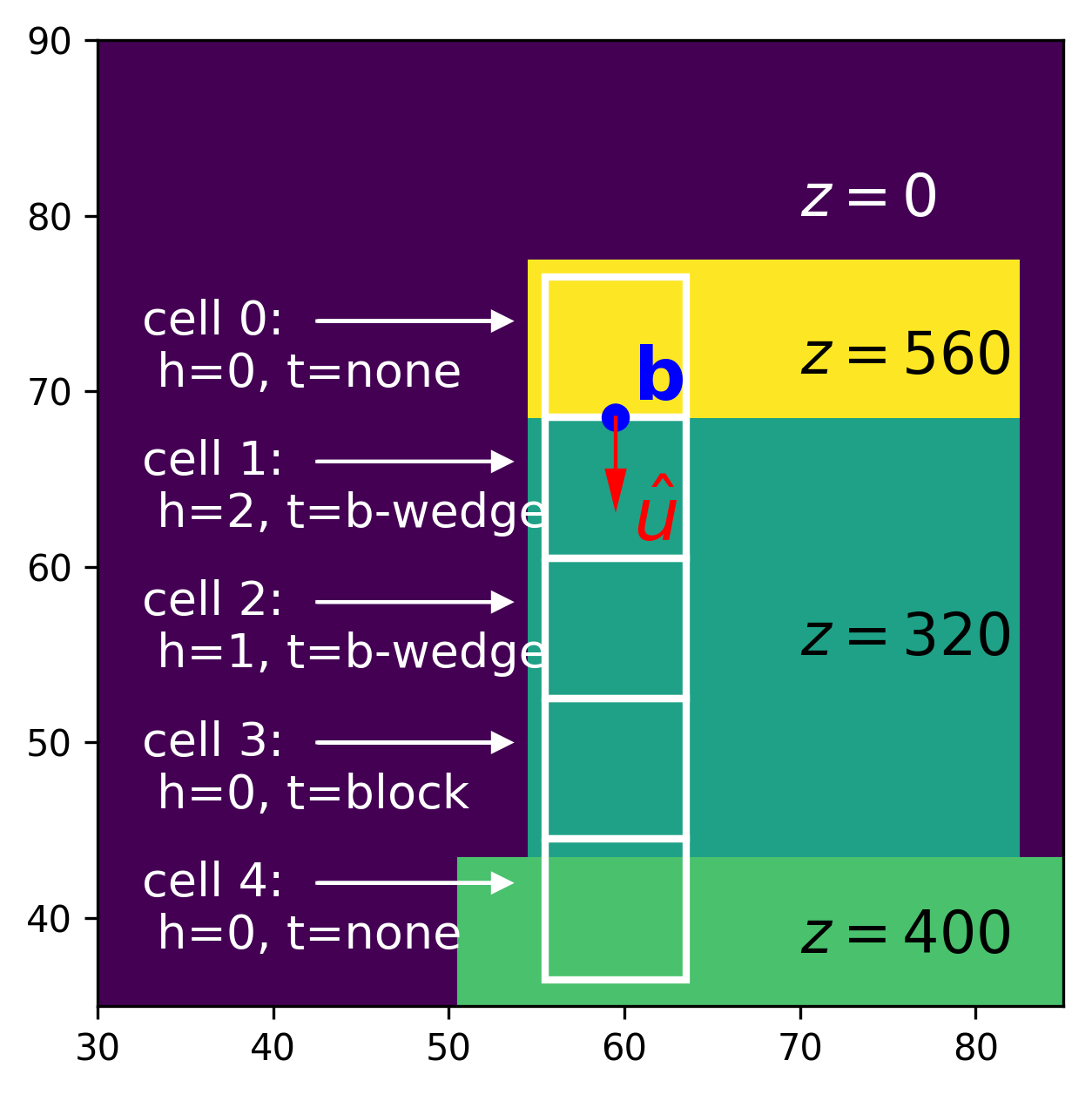}
 \includegraphics[width=0.26\columnwidth]{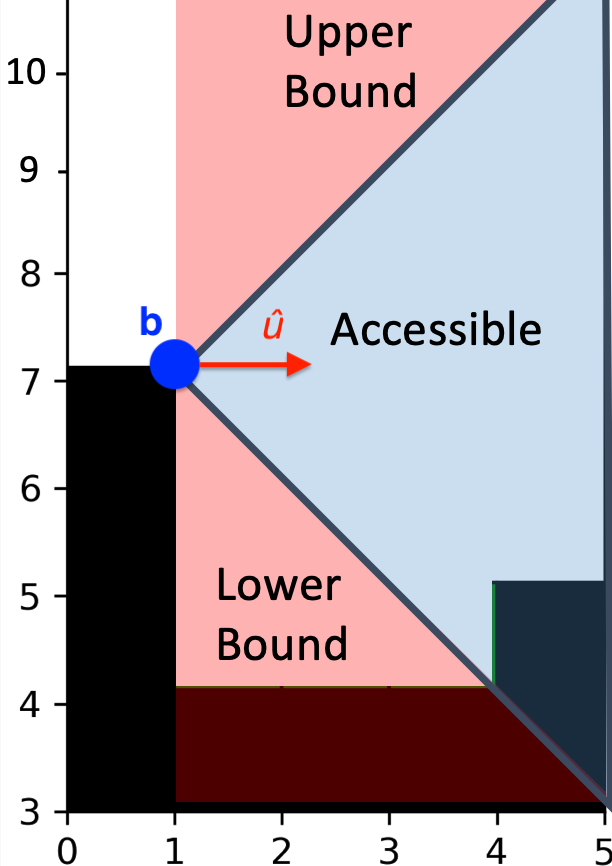}
 \includegraphics[width=0.32\columnwidth]{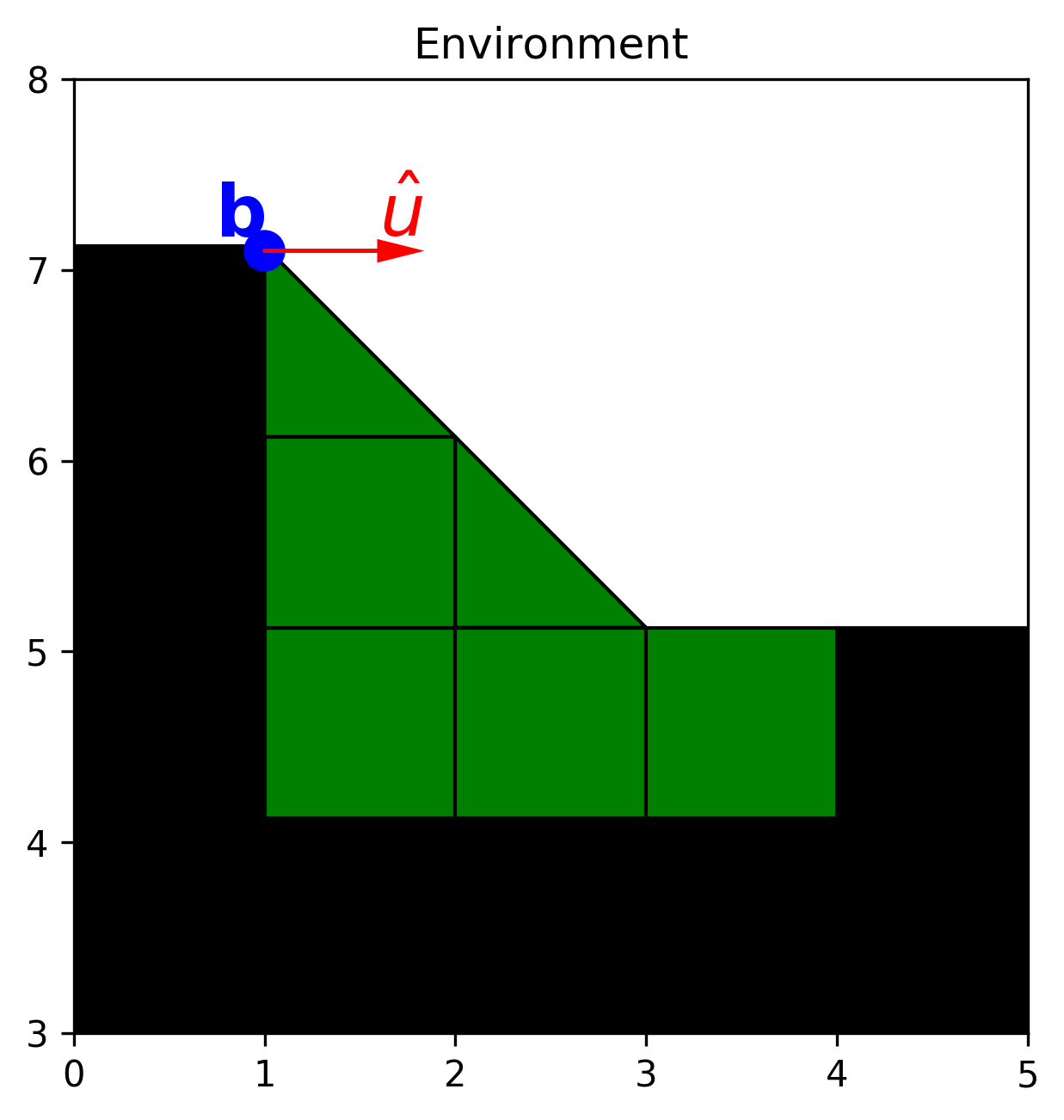}
 %\caption{Example environment and structure.  Top: Example table-and-chair environment (left), and corresponding height field world (right), with labeled regions.  Bottom: Example structure X-Y view (left) and $\strVector$-Z view (right).}
 \caption{Detailed view of the structure synthesized by our algorithm to connect the top of the chair (z=560) to the top of the table (z=400). \textbf{Left:} Top-down (X-Y) view showing structure cells.  \textbf{Center:} Side ($\strVector$-Z) view showing upper and lower bounds used in the Waterfall algorithm. \textbf{Right:} Side view showing structure synthesized by Waterfall.}
\vspace{-0.5cm}
 \label{fig:example-environment}
\end{figure}
%
% \begin{figure}
%  \includegraphics[width=0.5\columnwidth]{images/example-structure.png}
%  \includegraphics[width=0.4\columnwidth]{images/xz-view.png} 
%  \caption{Example structure X-Y view (left) and $\strVector$-Z view (right)}
%  \label{fig:example-structure}
% \end{figure}
%
\section{Waterfall Algorithm - Generating All Useful Structures}
\label{sec:waterfall}
To solve this problem, our first step is to generate a set $\structures$ of potentially useful structures.
A valid structure may be placed at any position and orientation in the environment, but we observe that  it is only \emph{useful} if it connects two different regions, so we ought to look for structures that bridge the boundaries between regions. Regions are bounded by sloped areas (where nodes are removed) and sharp cliffs (where edges are removed), but because sloped areas violate the conditions for buildability, we consider only cliffs as candidate structure locations.  We define the set of \emph{build points} $\buildpoints = \enumSet{\buildpoint}{n}$ to be the coordinates of the cliff edges.  At each build point, we attempt to synthesize a structure connecting the top of the cliff to another region.

For the robot to move from the cliff surface onto the structure, the structure's orientation $\strVector$ should be nearly perpendicular to the cliff surface (Definition~\ref{def:valid-structure}). In our implementation, the cliff surface normal at $\buildpoint$ is estimated by selecting all nodes with a radius of $\lblock$ of $\buildpoint$ and training a linear classifier (specifically an SVM) using node coordinates as features and node regions as labels; the resulting classification plane provides a suitable estimate of the surface normal. The error rate of the classifier provides a measure of surface flatness: if more than 10\% of training points are incorrectly classified, the boundary is considered non-flat and the build point is rejected.  

\subsection{Algorithm}
Given build point $\buildpoint$ and orientation $\strVector$, we synthesize valid structures by solving the constraint satisfaction problem imposed by the conditions for structure validity (Definition~\ref{def:valid-structure}).  
Using parameters $\lblock=8cm$ and $\dzcliff=4cm=\lblock/2$ for our robot and building blocks, 
the constraint imposed by Equations~\ref{eqn:Zi}-\ref{eqn:traversable} imposes upper and lower bounds on the cell heights $\height_i$ (measured in number of blocks) as a function of the bounds on its neighbors:
%
% \begin{eqnarray}
%   \height_{max}(i) =&  \left\lfloor h_{max}(i-1) + \frac{\environment(\cell_{i-1})-\environment(\cell_i)}{\lblock}+1.5 \right\rfloor \\ 
%   \height_{min}(i) =&  \left\lceil h_{min}(i-1) + \frac{\environment(\cell_{i-1})-\environment(\cell_i)}{\lblock}-1.5 \right\rceil 
% \end{eqnarray}
%
\begin{eqnarray}
  \cellmax(i)_j =&  \left\lfloor \cellmax(j) + \frac{\environment(\cell_{j})-\environment(\cell_i)}{\lblock}+1.5 \right\rfloor \\ 
  \cellmin(i)_j =&  \left\lceil \cellmin(j) + \frac{\environment(\cell_{j})-\environment(\cell_i)}{\lblock}-1.5 \right\rceil 
\end{eqnarray}
The brackets $\lceil x \rceil$ and $\lfloor x \rfloor$ denote the ceiling and floor functions, which round $x$ up or down to the nearest integer (respectively).
%The above equations enforce the constraints imposed on $\cell_i$ by $\cell_{i-1}$; we can write similar equations for the constraint imposed on $\cell_i$ by  $\cell_{i+1}$ by replacing $i-1$ with $i+1$ everywhere in the above equations. 
The above equations enforce the constraints imposed on $\cell_i$ by $\cell_{j}$, where $j=i \pm 1$.
Fig. \ref{fig:example-environment} illustrates these upper and lower bounds for an example structure.  

The $\waterfall$ algorithm (Algorithm~\ref{alg:waterfall}) solves for structures by propagating these bounds outwards from $\buildpoint$.  It begins by placing the first structure cell at the top of the cliff, assigning it height $h=0$. It then marches cells into the lower region, outwards from $\buildpoint$ in the direction of $\strVector$ (Fig.~\ref{fig:example-environment}).  For each new cell, it calculates $\height_{min}$ and $\height_{max}$, propagating the constraints forward. If it encounters a cell that is unbuildable, or for which $\height_{max}<0$, it returns False (no structure can be built here). If it finds a cell for which $\height_{min}<0$, it records this cell as the endpoint of the structure, and assigns it height $\height=0$.

The algorithm then marches back to $\buildpoint$, assigning cell heights.  At each cell $c_i$, it checks the $\height_{min}$ constraint in the backwards direction (imposed on $i$ by $i+1$), and assigns $\height_i$ to be the maximum of the lower-bounds from enforcing the constraint in both directions.

The algorithm performs a  final pass to assign terminators.  At each cell $i$, it considers the difference in height $Z$ between its neighbors $i+1$ and $i-1$.  If magnitude of the difference is less than $\lblock$/2, the terminator is a block (flat); otherwise, the terminator is a wedge oriented so that its slope matches the sign of $\Delta Z$.
\subsection{Correctness Guarantees}

\begin{lemma} [Optimality]\label{lem:waterfallopt}
%\subsubsection{Optimality}
$\waterfall$ finds the cheapest valid structure (if it exists) that can be built at  $\buildpoint$ which will connect the region at the top of the cliff to another region.
\end{lemma}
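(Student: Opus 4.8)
The plan is to prove Lemma~\ref{lem:waterfallopt} in two parts: (1) \emph{soundness and completeness} --- $\waterfall$ returns a structure at $\buildpoint$ if and only if a valid connecting structure exists there; and (2) \emph{optimality} --- whenever it returns a structure, that structure has minimum cost among all valid structures at $\buildpoint$ connecting the top-of-cliff region to another region. The key observation driving both parts is that the geometry forces all structures at $\buildpoint$ with orientation $\strVector$ to share the same sequence of cell footprints $\cell_1, \cell_2, \ldots$, since $\buildpoint$ and $\strVector$ determine the cell corners $\cellCorners_i$ completely; the only free variables are the heights $\height_i$ (and the terminators $\terminator_i$, which affect only the final surface-matching pass, not cost). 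So the search space is exactly the set of integer height-vectors satisfying the chained bound constraints $\cellmin(i)_j \le \height_i \le \cellmax(i)_j$ for $j = i\pm1$, together with $\height_1 = \height_n = 0$, the buildability condition on interior cells, and $\height_i \ge 0$ everywhere.

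First I would establish the \textbf{forward pass} invariant: after processing cell $i$, the recorded $\cellmin(i)$ and $\cellmax(i)$ are the tightest bounds implied by the constraint from cell $i-1$ alone, given that $\height_1 = 0$. This is an induction on $i$ using Equations (already given) for $\cellmax(i)_{i-1}$ and $\cellmin(i)_{i-1}$. Then I would argue that if the forward pass aborts --- either hitting an unbuildable interior cell or a cell with $\cellmax(i) < 0$ --- then \emph{no} valid height-vector exists: an unbuildable cell is unbuildable regardless of heights, and $\cellmax(i) < 0$ means every integer height consistent with the prefix is negative, contradicting $\height_i \ge 0$. Conversely, when the forward pass finds the first cell with $\cellmin(i) < 0$, this is a legal endpoint (we may set $\height_i = 0$ and exempt it from buildability, matching Definition~\ref{def:valid-structure}), and it is the \emph{earliest} such endpoint, which matters for optimality. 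I would also note the endpoint cell must lie in a different region than $\cell_1$ --- this follows because cells before it all required strictly positive height or sat on the cliff face, so the ``descent'' only terminates once we reach ground in the lower region; if it never does, the algorithm correctly returns False.

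Next, the \textbf{backward pass}: I would show that setting $\height_i := \max\bigl(\cellmin(i)_{i-1},\, \cellmin(i)_{i+1}\bigr)$ (clamped at $0$) for $i = n-1$ down to $2$, with $\height_n = 0$ fixed, produces a feasible height-vector, and moreover the \emph{pointwise minimal} feasible one. Feasibility: each $\height_i$ meets both lower bounds by construction, and I must check it also meets both upper bounds --- this is where I'd lean on the $\lblock = 2\,\dzcliff$ relationship, which makes the gap between consecutive lower and upper bounds wide enough (the $\pm 1.5$ slack) that the greedy minimal choice never overshoots; this is the one place a short calculation is genuinely needed. Minimality: any feasible vector $\height'$ satisfies $\height'_i \ge \cellmin(i)_{i-1}$ and (by downward induction from $\height'_n = 0$) $\height'_i \ge \cellmin(i)_{i+1}$, so $\height'_i \ge \height_i$ for all $i$; hence $\cost(\structure) = \sum \height_i \le \sum \height'_i = \cost(\structure')$. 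Combined with the earliest-endpoint property (a shorter structure cannot cost more per-cell but also has strictly fewer cells, and all our cells have nonnegative height, so truncating earlier cannot increase cost --- and the algorithm already picks the earliest valid endpoint), this gives global optimality over all valid structures at $\buildpoint$.

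The main obstacle I anticipate is the feasibility check in the backward pass --- verifying that the pointwise-minimal lower-bound assignment never violates an \emph{upper} bound. This requires unwinding the interaction between the forward-propagated $\cellmax$ values and the backward-propagated $\cellmin$ values across a cell that receives constraints from both sides, and it is exactly here that the specific arithmetic $\dzcliff = \lblock/2$ (and the resulting $\pm 1.5$ rounding slack) is load-bearing; with a different ratio the greedy construction could fail and the lemma would need a genuine search. A secondary subtlety is the terminator-assignment pass: I would argue it only refines surface matching within the already-fixed cell heights and therefore affects neither validity (the cliff conditions were set up to hold for the chosen heights) nor cost, so it is safe to treat it as a cosmetic post-processing step in the optimality argument.
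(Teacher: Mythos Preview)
Your proposal follows the same two-phase skeleton as the paper's own proof: the forward pass locates the earliest feasible termination cell, and the backward pass assigns each interior cell the minimum height satisfying the lower bounds propagated from both ends. The paper's proof is in fact only a three-sentence sketch stating exactly these two claims without further justification, so your version is strictly more rigorous.

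Two remarks. First, you correctly isolate the one genuinely technical obligation the paper omits entirely: verifying that the pointwise-minimal height assignment also respects the \emph{upper} bounds $\cellmax(i)_{i\pm1}$, and that this relies on the specific choice $\dzcliff = \lblock/2$ (the $\pm 1.5$ slack). That observation is sound and is the right place to do the actual calculation. Second, your justification that the \emph{earliest} endpoint yields the globally cheapest structure (``truncating earlier cannot increase cost'') is slightly loose: a longer structure need not contain the shorter one as a height sub-vector, because the backward-propagated constraints from a farther endpoint are different. The cleaner argument is that for any valid structure of length $m>n$, the forward bounds alone force $\height_i^{(m)} \ge \cellmin(i) \ge 0$ on cells $1,\ldots,n{-}1$, the extra cells $n,\ldots,m$ contribute nonnegative cost, and on flat or descending ground the backward constraint in the $n$-structure is dominated by the forward one so $\height_i^{(n)} = \cellmin(i)$; a line or two making this explicit would close the gap. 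The paper's proof does not address this point at all, so even as written your treatment is already an improvement.
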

\begin{proof}
Valid structures must begin and end with zero-height cells (i.e. at the environment surface).  In its first pass, $\waterfall$ finds the closest cell to $\buildpoint$ at which a valid structure could terminate, and selects this cell as the end of the structure.
At each cell between the terminators, $\waterfall$ computes the lower bound on structure height imposed by propagating the traversability constraints forward from the start cell and backwards from the end cell, and selects the minimum structure height satisfying both constraints.
\end{proof}

\begin{lemma} [Completeness]\label{lem:waterfallcomplete}
%\subsubsection{Completeness}
%The $\waterfall$ algorithm produces the shortest possible structure that connects a build point $\buildpoint$ to another region.
For the purposes of minimal structure construction, 
%Besides the structure found by $\waterfall$, 
we need not consider any structures that could be built at $\buildpoint$ other than the one produced by  $\waterfall$.
\end{lemma}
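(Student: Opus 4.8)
The plan is to prove the claim by an exchange argument. I would show that if any conflict-free set of structures $\structures$ that makes the environment traversable uses some structure $\structure'$ built at $\buildpoint$, then replacing $\structure'$ by the structure $\structure^W=\waterfall(\buildpoint)$ yields a set $\structures'$ that is still conflict-free, still makes the environment traversable, and costs no more than $\structures$; together with the sub-claim that if $\waterfall(\buildpoint)$ fails then there is no useful structure at $\buildpoint$ at all, this gives the lemma. The first observation I would record is that, by the endpoint/boundary conditions of Definition~\ref{def:valid-structure}, every valid structure that is useful at $\buildpoint$ has its first cell at $\buildpoint$ and its orientation fixed to the estimate of the cliff-surface normal, so all candidate structures at $\buildpoint$ lay their cells along one common ray and differ only in their length and in the heights and terminators they assign; this reduces the whole lemma to comparing $\structure^W$ with the other structures on this ray.

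Next I would pin down two properties of $\structure^W$. The first, that it is the cheapest valid structure at $\buildpoint$, is exactly Lemma~\ref{lem:waterfallopt}. The second is that it is the \emph{shortest} valid structure at $\buildpoint$, so its cell footprint is contained in that of every other valid structure at $\buildpoint$. To obtain this I would argue by induction on the cell index that the bounds $\cellmin(i)$ and $\cellmax(i)$ propagated in Waterfall's forward pass are genuine lower and upper bounds on $\height_i$ in \emph{every} valid structure at $\buildpoint$: the base case is that the first cell has height $0$, and the inductive step holds because the $\cellmin$ and $\cellmax$ recurrences encode exactly the integer height restrictions that Equations~\ref{eqn:Zi}--\ref{eqn:traversable} place on a cell given those on its neighbor, while interior buildability depends only on the terrain and is forced by Definition~\ref{def:valid-structure}. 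Hence if Waterfall returns False (it reaches an unbuildable interior cell or a cell with $\cellmax<0$ before any zero-height termination is feasible) then no valid structure exists at $\buildpoint$, and if it succeeds then no valid structure can terminate nearer to $\buildpoint$ than $\structure^W$ does, so every valid structure at $\buildpoint$ uses a superset of the cells of $\structure^W$.

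Given these two facts, two of the three obligations in the exchange are immediate: the cost does not increase by Lemma~\ref{lem:waterfallopt}, and no new structure--structure conflict is created because the cells of $\structure^W$ are a subset of those of $\structure'$, so any structure that $\structure^W$ intersects would also be intersected by $\structure'$. The remaining obligation, and the step I expect to be the crux, is to show that the swap neither creates a region--structure conflict nor disconnects $\regionGraph$. I would use that every structure at $\buildpoint$ leaves the cliff into the same adjacent lower region and that $\structure^W$ terminates at the first cell where it can rest on the ground, so $\structure^W$ still joins the cliff-top region to a region that $\structure'$ reached, and I would try to argue that the cells $\structure'$ occupies \emph{beyond} $\structure^W$'s terminator cannot be carrying connectivity that $\structure^W$ fails to provide. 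The delicate point is that a short straight strip of cells can in principle pinch off a sliver of a grid region that a longer strip would have straddled, so making this rigorous will likely need either a monotonicity statement about connectivity of a grid region under removal of a prefix strip of a structure's cells, or an appeal to the paper's convention that a region split by a structure is simply re-partitioned into new regions whose fresh cliff edges are again fed to Waterfall (so that no optimality is lost, at the cost of spelling out the bookkeeping).
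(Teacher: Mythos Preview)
Your exchange argument is headed in the right direction, and you correctly identify the connectivity step as the crux, but neither of your two proposed fixes closes the gap. If $\structure'$ at $\buildpoint$ is longer than $\structure^W$, it may terminate in a region $\region_z$ that lies \emph{beyond} the region $\region_b$ where $\structure^W$ first touches down. In the spanning set, $\structure'$ functions as an edge $(\region_a,\region_z)$; after your one-for-one swap it becomes an edge $(\region_a,\region_b)$, and nothing guarantees that $\region_b$ lies on the $\region_z$-side of the cut induced by deleting $\structure'$, so the swapped set need not span. Your first fix (monotonicity of a grid region under removal of a prefix strip) addresses only whether $\structure^W$ creates a region--structure conflict, not whether it reaches far enough; your second fix (re-partitioning a split region and feeding the new cliffs back to $\waterfall$) likewise concerns a structure that cuts a region in two, not one that simply stops short of the region the tree needs.

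The paper's proof sidesteps this by \emph{not} insisting the replacement sit at $\buildpoint$. It observes that a longer structure $\structure_L$ at $\buildpoint$ must cross several region boundaries along $\strVector$; at each such crossing there is another build point, with its own $\waterfall$ structure whose footprint lies inside that of $\structure_L$ and whose cost is strictly smaller. Removing $\structure_L$ from the minimum spanning tree separates the environment into two components, and since the chain of regions pierced by $\structure_L$ starts on one side of that cut and ends on the other, at least one of these shorter $\waterfall$ structures bridges the cut. Swapping that one in yields a cheaper spanning set, contradicting minimality. The missing idea in your proposal is exactly this cut argument with replacements drawn from \emph{all} build points along $\structure_L$'s ray, not just $\buildpoint$ itself.
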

\begin{proof}
To show this, consider if it were not the case. 
Say a longer structure $\structure_L$ (connecting $\buildpoint$ to a different region) is required as part of the minimum spanning tree of structures. This structure would cut through multiple regions.
Based on the buildability conditions, at each point on a region boundary that this structure crosses through there is another potential build point, with its own potential minimal structure identified by the $\waterfall$ algorithm in the $\strVector$ direction. All of these minimal structures must cost less than $\structure_L$, and their footprints are fully contained within the footprint of $\structure_L$. Call the set of these structures $\structures_L$.

Now consider if $\structure_L$ were removed from the minimum spanning tree. The environment would then be separated into two separate components. At least one of the minimal structures in $\structures_L$ must bridge these components. If this structure is added back to the minimum spanning tree, then the environment is again fully traversable, and the new set of structures costs less than the original. So the original spanning tree of structures was not minimal.
We therefore only need to search through the set of structures discovered through $\waterfall$.
\end{proof}
\subsection{Runtime and Generalization}
The structure synthesis problem can be formulated as an integer linear program (ILP) -- the variables (cell heights) are integer valued, while the constraints (traversability) and objective (number of blocks used) are linear.  Well-established algorithms can solve general ILP problems, however ILP is known to be NP-Complete.  For the particular case of our building blocks, we are able to solve this ILP efficiently using the $\waterfall$ algorithm, which synthesizes structures with $N$ cells in $O(N)$ time.  For other kinds of building blocks, the $\waterfall$ algorithm could be modified, or a general ILP solver could be used.
%
% Waterfall Algorithm:
\newcommand{\hx}{hx}
\newcommand{\hn}{hn}
\newcommand{\h}{h}
\begin{algorithm}
  \begin{algorithmic}[1]
    \Require{build point $\buildpoint$ and direction $\strVector$}
    \Statex $\hn$, $\hx$, $\h$, and $t$ are lists that resize automatically.
    %\Statex
    % \Statex
    % \Function{cellmin}{j, i}
    %   \State \Return $\left\lceil h_{min}[j] + \frac{\environment(\cell_j)-\environment(\cell_i)}{\lblock}-1.5 \right\rceil$
    % \EndFunction
    % %
    % \Statex
    % \Function{cellmax}{j, i}
    % \State \Return $\left\lfloor h_{max}[j] + \frac{\environment(\cell_j)-\environment(\cell_i)}{\lblock}+1.5 \right\rfloor$
    % \EndFunction
    % %
    % \Statex
    \Function{Waterfall}{$\buildpoint, \strVector$}
    % Finding bounds
    %\Let{$end$}{$\infty$} \Comment{Pass 1: Find Endpoint}
    \State $end \gets \infty \mrm{,~} \hx[0] \gets 0 \mrm{,~} \hn[0] \gets 0$
    \For{$i \gets 1;~ i< \infty;~ i \gets i + 1$}  \Comment{Pass 1: Endpoint}
      \If{$\neg buildable(i)$}
        \Return False
      \EndIf
      %\Let{$h_{min}[i]$}{$\mathrm{ceil} \left( h_{min}[i-1] + \frac{\environment(i-1)-\environment(i)}{\lblock}-1.5 \right)$}
      \Let{$\hn[i]$}{$\cellmin(i)_{i-1}$}
      %\Let{$h_{max}[i]$}{$\mathrm{ceil} \left( h_{max}[i-1] + \frac{\environment(i-1)-\environment(i)}{\lblock}+1.5 \right)$}
      \Let{$\hx[i]$}{$\cellmax(i)_{i-1}$}
      \If{$\hx[i] < 0$}
        \Return False
      \EndIf
      \If{$\hn[i] < 0$}
        %\Let{$end$}{$i$}
        \State $end \gets i \mrm{,~~} \hx[i] \gets 0 \mrm{,~~} \hn[i] \gets 0$
        \State \textbf{break}
      \EndIf
    \EndFor
    % Optimization pass:
    %\Statex  
    \State $h[0] \gets 0 \mrm{,~~} h[end] \gets 0$ \Comment{Pass 2: Heights}
    \For{$i \gets end-1;~ i>0;~ i\gets i -1$}
      %\State $\h[i] \gets \max \left( \hn[i],~ \mathrm{ceil} \left( \hn[i+1] + \frac{\environment(i+1)-\environment(i)}{\lblock}-1.5 \right)\right)$
      \State $\hn[i] \gets \max (\cellmin(i)_{i+1}, ~\hn[i] )$
      \Let{$\h[i]$}{$\hn[i]$}
    \EndFor
    % Terminators pass:
    %\Statex 
    \State $t[0] \gets none~;~ t[end] \gets none$ \Comment{Pass 3: Terminators}
    \For{$i \gets 1;~ i< end;~ i \gets i+1$}
      \Let{$\Delta Z$}{$\environment(\cell_{i-1})-\environment(\cell_{i+1})+\lblock (\h[i-1])-\h[i+1])$}
      \If{$\left\lvert \Delta Z \right\rvert \le 0.5*\lblock$}
        %\Let{$t[i]$}{$block$}
        $t[i] \gets block$
      \ElsIf{$\Delta Z < 0$}
        %\Let{$t[i]$}{$\fwedge$}
        $t[i] \gets \fwedge$
      \ElsIf{$\Delta Z > 0$}
        %\Let{$t[i]$}{$\bwedge$}
        $t[i] \gets \bwedge$
      \EndIf
    \EndFor
    \State \Return $\langle h,t \rangle$
    \EndFunction
  \end{algorithmic}
  \caption{Waterfall Algorithm}
  \label{alg:waterfall}
\end{algorithm}
\section{BB-MST Algorithm -- Solving for the Minimum Spanning Tree of Structures}
\label{sec:bbmst}
Once the useful structures are identified, then these structures can be used to make the environment traversable. % by choosing a  subset of structures $\structures^* \subseteq \structures$.
Given the set  $\structures$ of structures generated by running $\waterfall$ at every build point, form graph $\regionGraph(\regions, \structures)$ with nodes as regions and edges as structures connecting regions, and assign edge weights equal to the cost of their structures. We seek $\solution{\regions}{\structures}$, a conflict-free min-cost spanning tree of $\regionGraph(\region, \structures)$.   
%When we say that $\solution{\regions}{\structures}$ is conflict free, we mean that it may contain no structure-structure conflicts, and if it contains any region-structure conflicts, $\solution{\regions}{\structures}$ must span the split regions.
We refer to this problem as \emph{struct-MST}.

\subsection{NP-Hardness of struct-MST}
Kruskal's algorithm computes minimum-spanning trees of graphs in $O(|E| \log |V|)$ time.  However, structure-structure and region-structure conflicts (Section~\ref{sec:conflicts}) impose constraints that make struct-MST a much more difficult problem to solve efficiently. Structure-structure conflicts create \emph{pairwise negative disjunctive constraints} between edges in $\regionGraph$, that is, pairs of edges that cannot both be present in the solution.  These constraints may be represented in terms of a \emph{conflict graph} with vertices as edges in the original graph, and edges as constraints.  It has been shown that deciding the existence of a spanning tree (as well as finding the min-cost spanning tree) of a graph is strongly $\mathcal{NP}$-hard under negative disjunctive constraints, unless the conflict graph has a maximum path length less than two \cite{darmann2011paths}. This is not the case for struct-MST (because any structure could conflict with multiple others), making struct-MST at least $\mathcal{NP}$-hard. 
\subsection{Algorithm}
%$$
We present $\bbmst$, a branch-and-bound algorithm to solve struct-MST.  Branch-and-bound is a general algorithm for solving combinatorial optimization problems, and has been employed as a practical technique to solve many $\mathcal{NP}$-hard problems \cite{lawler1966branch}.
%The method solves cost-minimization problems through an intelligently structured search of the space of all feasible solutions: it repeatedly partitions (``branches'') the solution space into subsets and computes a lower bound on the cost of the solutions within each subset.  After each branching step, the lower-bound cost of each subset is compared to the cost of the best-yet feasible solution, and those with cost bounds exceeding the best-yet solution are ignored.
%Employing branch-and-bound does not change the \emph{worst-case} runtime of an $\mathcal{NP}$-hard problem, and for struct-MST the worst-case runtime remains exponential in the number of potential conflicts.  However, branch-and-bound can lead to a significant speedup of the \emph{average} runtime, allowing many practical problems to be solved efficiently.

Algorithm~\ref{alg:BB-MST} provides pseudocode for $\bbmst$.
Each time $\bbmst$ is called, it uses Kruskal's algorithm to solve for $\mst_{\regions,\structures}$, the MST of the regions and structures passed in as arguments, and compares it to the best-yet solution $\mst^*$. Regardless of whether $\mst_{\regions, \structures}$ is a valid solution, if $\cost(\mst_{\regions,\structures}) \ge \cost(\mst^*)$, $\bbmst$ returns $\mst^*$, terminating search of the branch.
If $\mst_{\regions,\structures}$ is conflict-free and $\cost(\mst_{\regions,\structures}) < \cost(\mst^*)$, $\mst^*$ is updated and $\mst_{\regions,\structures}$ is returned. 

If $\mst_{\regions,\structures}$ includes a conflict, $\bbmst$ recursively branches, solving two or more child problems in which some of the conflicting edges or nodes have been removed, and returns the cheapest of the child solutions.   Structure-structure and region-structure conflicts are handled as follows:

\subsubsection{Structure-Structure Conflicts}
If there is a conflicting pair of structures $\{\structure_1, \structure_2\}$ in $\mst_{\regions, \structures}$, form two child problems, removing one conflicting structure from each: $\mst_1=\bbmst(\regions, \structures \setminus \structure_1)$ and $\mst_2=\bbmst(\regions, \structures \setminus \structure_2)$.  Return the cheaper of the two solutions.
\subsubsection{Region-Structure Conflicts} \label{sec:region-structure-conflict}
%$
Region-structure conflicts are somewhat more complex.  Let $\structures(\mst_{\regions,\structures})$ be the set of edges of $\mst_{\regions, \structures}$.  $\forall (\structure, \region) \in \structures(\mst_{\regions,\structures}) \times \regions$, check whether $\structure$ has more than one structure cell containing a boundary node of $\region$; if so, $\structure$ might split $\region$. 
Let $\region_c$ be one such region, and $\structures_c = \enumSet{\structure}{k}$ be the set of structures meeting this condition.  Let $V_{\structures_c}$ be the set of nodes in the grid-graph $G(V,E)$ occupied by $\structures_c$, $V_{\region_c}$ be the set of nodes in $\region_c$, and $G_{\region_c}$ be the subgraph of $G(V,E)$ induced by $V_{\region_c} \setminus V_{\structures_c}$. Compute $\regions_{\region_c}$, the set of connected components of $G_{\region_c}$: if $R_{r_c}$ has more than one element, $\region_c$ has been split and is in conflict with $\structures_c$.

We handle this by forming $k+1$ branches, where $k$ is the number of structures in $\structures_c$. In each of the first $k$ branches, we remove one conflicting structure: $\mst_i = \bbmst(\regions,\structures \setminus \structure_i) ~\forall~ \structure_i \in \structures_c$.  In the final branch, we keep all conflicting structures,  but split the region $\region_c$ into multiple sub-regions: Letting $\regions_{split} = (\regions \setminus \region_c) \cup \regions_{\region_c}$, we have $\mst_{split} = \bbmst(\regions_{split}, \structures)$.  We return the cheapest of all solutions $\left\lbrace \mst_1,\mst_2, \ldots, \mst_k, \mst_{split}\right\rbrace$.
% % BB-MST Example Figure
% \begin{figure}
%  \center
%  \includegraphics[width=0.5\columnwidth]{}
%  \includegraphics[width=0.45\columnwidth]{} 
%  \caption{$\bbmst$ Examples.  Left: Branching Example.  Right: $\regionGraph$ for the table-and-chair example environment.}
%  \label{fig:branching-example}
% \end{figure}
% BB-MST Algorithm:
\newcommand{\branchEdges}{\operatorname{\textsc{branchEdges}}}
\newcommand{\branchRegion}{\operatorname{\textsc{branchRegion}}}
\begin{algorithm}
  \begin{algorithmic}[1]
    \Require{Regions $\regions$ and candidate structures $\structures$}
    \Statex
    \State Initialize $\mst^* \gets \emptyset$ 
    \Function{BB-MST}{$\regions, \structures$}
      \State Form $\regionGraph(\regions, \structures)$
      % dead-branch
      \If{$\regionGraph$ is not a connected graph}
        \Return $\emptyset$
      \EndIf
      % mst
      \Let{$\mst_{\regions, \structures}$}{$kruskal(\regionGraph)$} \label{line:kruskal}
      % shortcut
      \If{$\cost(\mst_{\regions, \structures}) \ge \cost(\mst^*)$}
        \Return $\mst^*$
      \EndIf
      % new-optimum
      \If{$\cost(\mst_{\regions, \structures}) < \cost(\mst^*)$ and $\mst_{\regions, \structures}$ is valid}
        \Let{$\mst^*$}{$\mst_{\regions, \structures}$}
        \State \Return $\mst_{\regions, \structures}$
      \EndIf
      \If{$\mst_{\regions, \structures}$ has a structure conflict $\lbrace \structure_1, \structure_2 \rbrace$}
        \State \Return $\branchEdges(\lbrace \structure_1, \structure_2 \rbrace,\regions, \structures)$
      \ElsIf{$\mst$ has a region conflict $\lbrace \region_c, \structures_c \rbrace$}
        \Let{$\mst_{edges}$}{$\branchEdges(\structures_c, \regions, \structures)$} 
        \Let{$\mst_{split}$}{$\branchRegion(\region_c,\structures_c,\regions,\structures)$} 
        \State \Return the cheaper of $\mst_{edges},~ \mst_{split}$ 
      \EndIf
    \EndFunction
    \Statex 
    \Function{branchEdges}{$\structures_c, \regions, \structures$}
      \Let{$\mst$}{$\emptyset$}
      \ForAll{$\structure \in \structures_c$}
        \Let{$\mst_\structure$}{$\bbmst(\regions, \structures \setminus \structure)$}
        \If{$cost(\mst_\structure)<cost(\mst)$}
          $\mst \gets \mst_\structure$
        \EndIf
      \EndFor
      \State \Return $\mst$
    \EndFunction
    \Statex 
    \Function{branchRegion}{$\region_c, \structures_c, \regions, \structures$}
      %\Let{$\regions_{\region_c}$}{$cc(\region_c \setminus \structures_c)$} \Comment{Split $\region_c$ into subregions}
      \Let{$\regions_{\region_c}$}{$\mathrm{split\_into\_subregions}(\region_c, \structures_c)$}
      \Let{$\regions_{split}$}{$(\regions \setminus \region_c )\cup \regions_{\region_c}$}
      \ForAll{$\structure \in \structures$}
        \If{$\structure$ connected to $\region_C$}
          \State Reassign $\structure$ to connect to appropriate $\region \in \regions_{split}$
        \EndIf
      \EndFor
      \State \Return $\bbmst(\regions_{split}, \structures)$
    \EndFunction
  \end{algorithmic}
  \caption{BB-MST Algorithm}
  \label{alg:BB-MST}
\end{algorithm}
\subsection{Correctness Guarantees}
\begin{lemma}[Child Bounding] \label{lem:bounding}
  Let $\mst_{\regions, \structures}$ (returned by Kruskal's algorithm, Algorithm~\ref{alg:BB-MST} line~\ref{line:kruskal}) contain one or more conflicts. $\cost(\mst_{\regions, \structures})$ is a lower bound on the cost of the solution to any child problem formed by branching on a conflict in $\mst_{\regions, \structures}$.
\end{lemma}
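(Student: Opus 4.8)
The plan is to reduce the statement to a single monotonicity property of minimum spanning trees under the two kinds of branching, and then lift that property through the recursion with a short induction. By ``the solution to a child problem'' I mean the cheapest conflict-free spanning tree that continued branching from that child can produce (its true optimum); this is the quantity that matters for soundness of the pruning step.

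First I would prove two one-step facts about $\cost(\mst(\regionGraph(\cdot,\cdot)))$, the weight of the \emph{unconstrained} minimum spanning tree. \textbf{(a) Edge deletion.} Branching on a structure--structure conflict, or on one structure of a region--structure conflict, replaces $\regionGraph(\regions,\structures)$ by $\regionGraph(\regions,\structures\setminus\structure_i)$: same nodes, one fewer edge. Since structure costs are non-negative, deleting an edge cannot decrease the MST weight --- and if it disconnects the region graph the child returns $\emptyset$, so the bound is vacuous --- hence $\cost(\mst(\regionGraph(\regions,\structures\setminus\structure_i))) \ge \cost(\mst_{\regions,\structures})$. \textbf{(b) Region split.} Branching on a region--structure conflict also produces the child $\regionGraph(\regions_{split},\structures)$, in which $\region_c$ is broken into sub-regions. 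If this graph is disconnected the child returns $\emptyset$ and the bound is again vacuous; otherwise take any spanning tree $\mst'$ of it, map each edge back to the structure it represents, and collapse the sub-regions of $\region_c$ back to $\region_c$. Because every candidate structure joins two \emph{distinct} regions of $\regions$, no edge becomes a self-loop, so the result is a set of edges of $\regionGraph(\regions,\structures)$ of total weight $\cost(\mst')$ that connects all of $\regions$ --- a connected spanning subgraph. Deleting edges from it produces a spanning tree of $\regionGraph(\regions,\structures)$ of weight at most $\cost(\mst')$, so $\cost(\mst(\regionGraph(\regions_{split},\structures))) \ge \cost(\mst_{\regions,\structures})$.

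Next I would observe that the solution to the subproblem at any node $(\regions',\structures')$ is a conflict-free spanning tree of some $\regionGraph(\regions'',\structures'')$ obtained from $\regionGraph(\regions',\structures')$ by a finite sequence of operations of type (a) and (b). An induction on the length of that sequence, invoking (a) or (b) at each step, shows that the cost of this solution is at least $\cost(\mst(\regionGraph(\regions',\structures')))$. Specializing $(\regions',\structures')$ to a child of the node that produced $\mst_{\regions,\structures}$, and applying (a) or (b) once more for that one branching step, chains the inequality down to $\cost(\mst_{\regions,\structures})$ --- which is exactly the claimed lower bound.

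I expect fact (b) to be the main obstacle: one has to argue carefully that collapsing the split sub-regions converts a spanning \emph{tree} of the refined region graph into a connected spanning \emph{subgraph} (in general not a tree) of the original region graph, that no edge degenerates into a self-loop in the process, and that non-negativity of structure costs then lets one drop edges back down to a spanning tree without increasing the cost. Fact (a) is the standard MST edge-monotonicity property, and the induction that carries (a)--(b) through the recursion is routine bookkeeping.
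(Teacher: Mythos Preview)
Your proof is correct and follows the paper's approach: the paper establishes exactly your one-step facts (a) and (b), handling (b) by collapsing the split sub-regions back into $\region_c$, obtaining a connected spanning subgraph of $\regionGraph(\regions,\structures)$, and dropping an edge to recover a spanning tree bounded below by $\cost(\mst_{\regions,\structures})$. The paper stops at the one-step bound and leaves the propagation through the recursion implicit, whereas you spell it out with the induction and also state (b) for splits into arbitrarily many sub-regions (the paper writes only the two-piece case); these are refinements rather than a different argument.
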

\begin{proof}
  Whenever $\bbmst$ branches, it either eliminates one structure, or it splits one region into multiple regions. 
  Consider the case where structure $\structure$ has been eliminated. It is clear that $\cost(\mst_{\regions, \structures}) \le \cost(\mst_{\regions, \structures \setminus \structure})$: $\mst_{\regions, \structures \setminus \structure}$ is optimal with respect to $\structures \setminus \structure$, so making $\structure$ available could only decrease cost.
  Consider the case where region $\region_c \in \regions$ has been split into two sub-regions $\region_1, \region_2$, resulting in a new set of regions $\regions_{split}=(\regions \setminus \region_c) \cup \{\region_1, \region_2\}$. 
$\mst_{\regions_{split}, \structures}$ is the MST which spans $\regions_{split}$. Form a new graph $\mathcal{\mst}$ identical to $\mst_{\regions_{split}, \structures}$ except that nodes $\region_1, \region_2$ have been merged together to form node $\region_c$. We may remove one edge from this graph to form a tree, which we will denote $\mst'$.  By construction, $\cost(\mst_{\regions_{split}, \structures}) = \cost(\mathcal{\mst}) \ge \cost(\mst')$. $\mst'$ spans the same set of nodes as $\mst_{\regions, \structures}$, but $\mst_{\regions, \structures}$ is the MST, so $\cost(\mst_{\regions, \structures}) \le \cost(\mst') \le \cost(\mst_{\regions_{split}, \structures})$. 
\end{proof}

\begin{lemma}\label{lem:anytime}
$\bbmst(\regions,\structures)$ returns the best-yet set of spanning structures that connects the regions in the environment.
\end{lemma}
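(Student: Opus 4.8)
The plan is to prove Lemma~\ref{lem:anytime} by structural induction on the recursion tree of the invocation $\bbmst(\regions,\structures)$, reading ``best-yet'' as: the value returned is either $\emptyset$ or a conflict-free set of structures spanning the regions, and its cost is minimal among all such sets the algorithm has encountered up to the moment this call returns. Before the induction I would establish a global execution invariant: the incumbent $\mst^*$ is at all times either $\emptyset$ (with the convention $\cost(\emptyset)=+\infty$) or a conflict-free spanning tree of structures, and $\cost(\mst^*)$ is non-increasing over the course of the execution. Both facts are immediate from the pseudocode --- $\mst^*$ is reassigned only on the line guarded by ``$\mst_{\regions,\structures}$ is valid'' together with ``$\cost(\mst_{\regions,\structures})<\cost(\mst^*)$'' --- and they are exactly what licenses returning $\mst^*$ in the pruning branch.

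First I would dispose of the non-recursive (base) cases. If $\regionGraph(\regions,\structures)$ is disconnected there is no spanning set of structures over the current regions, and returning $\emptyset$ is correct. If $\cost(\mst_{\regions,\structures})\ge\cost(\mst^*)$, the call returns the incumbent $\mst^*$; by the invariant this is a legitimate conflict-free spanning tree, and since this branch discovered no new feasible solution, $\mst^*$ is trivially the best found so far. If $\mst_{\regions,\structures}$ is itself conflict-free and cheaper than $\mst^*$, the call sets $\mst^*$ to it and returns it, which is a valid spanning tree and, being the new incumbent, the cheapest yet seen.

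For the inductive step I would treat the structure-conflict and region-conflict branches together. In both, the call makes a finite sequence of recursive calls --- $\bbmst(\regions,\structures\setminus\structure_i)$ for each conflicting structure, plus, in the region case, $\bbmst(\regions_{split},\structures)$ --- executed in order, each inheriting whatever incumbent its predecessors left behind, and it then returns the cheapest of the returned trees. By the induction hypothesis each child returns $\emptyset$ or a conflict-free spanning tree that is best-yet at its own return time; since $\cost(\mst^*)$ only ever decreases, the minimum over the children is precisely the best feasible solution seen by the time the parent returns, and a minimum of values each of which is $\emptyset$-or-valid is again $\emptyset$-or-valid. Well-foundedness of the induction follows from the strictly decreasing, nonnegative measure $|\structures|+|V|-|\regions|$: each edge-removal child drops $|\structures|$ by one while leaving $\regions$ fixed, and the region-split child leaves $\structures$ fixed while strictly increasing $|\regions|$ (a genuine split replaces one region by at least two sub-regions), with $|V|$ constant throughout.

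The main obstacle I expect is not any single case but the bookkeeping forced by the single mutable incumbent $\mst^*$: because it is shared across sibling recursive calls, ``best-yet'' is inherently a statement about a sequential execution trace, so the cleanest write-up threads the entry incumbent $\mst^*_{\mathrm{in}}$ and exit incumbent $\mst^*_{\mathrm{out}}$ through every case and proves that $\mst^*_{\mathrm{out}}$ is cost-minimal over $\{\mst^*_{\mathrm{in}}\}$ together with every feasible tree examined in the subtree, and that the returned value equals $\mst^*_{\mathrm{out}}$ in cost except in the degenerate disconnected-graph case, where it is $\emptyset$. Getting those quantifiers and the sibling ordering exactly right is the delicate part; note that this lemma only claims ``best found so far'', so it does not yet invoke the lower bound of Lemma~\ref{lem:bounding} --- that bound, together with completeness of $\waterfall$ (Lemma~\ref{lem:waterfallcomplete}), is what will be needed separately to upgrade ``best-yet'' into global optimality.
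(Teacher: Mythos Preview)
Your proof is internally coherent, but it proves a different---strictly weaker---statement than the paper does under the same lemma heading. You read ``best-yet'' literally as ``cheapest feasible tree encountered in the execution trace so far,'' and your induction (plus the nice well-foundedness measure $|\structures|+|V|-|\regions|$, which the paper omits) establishes exactly that. The paper, despite the lemma's wording, actually proves the stronger claim that each call returns $\min(\solution{\regions}{\structures},\mst^*)$, i.e.\ the \emph{optimal} conflict-free spanning tree for the given $(\regions,\structures)$ or the incumbent, whichever is cheaper. To get this, the paper's inductive step contains the key covering argument you skip: for a structure conflict $\{\structure_1,\structure_2\}$, every subset of $\structures$ not containing both lies in $\powerset(\structures\setminus\structure_1)\cup\powerset(\structures\setminus\structure_2)$, so the optimum over the parent equals the minimum of the children's optima (and analogously for region conflicts). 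The paper also invokes Lemma~\ref{lem:bounding} directly in the shortcut base case to justify that pruning loses nothing; you explicitly defer that.

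What each buys: your route cleanly separates ``the algorithm maintains a correct incumbent'' from ``the incumbent at termination is globally optimal,'' which is tidy and makes the anytime property transparent. But be aware that the deferred work is not just citing Lemma~\ref{lem:bounding}: you will also need the branching-completeness (power-set) argument somewhere, and the paper's Theorem~\ref{thm:main-result} proof does not supply it---it just cites this lemma. So if you keep your weaker reading, you must add that covering argument to Theorem~\ref{thm:main-result}; otherwise, adopt the paper's stronger inductive hypothesis here.
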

\begin{proof}
We prove by induction that each recursive call of $\bbmst(\regions, \structures)$ returns either $\mst^*_{\regions, \structures}$ or $\mst^*$ (the best-yet solution), whichever is cheaper.
\subsubsection{Base Case}
There are three conditions under which $\bbmst$ returns without branching. \textbf{(1) Null:} $\regionGraph$ is disconnected, so no spanning tree can be found and $\bbmst$ returns $\emptyset$. \textbf{(2) Shortcut:} $\cost(\mst_{\regions, \structures}) \ge \cost(\mst^*)$, so $\bbmst$ returns $\mst^*$; by Lemma~\ref{lem:bounding}, this branch cannot contain a solution cheaper than the current best-yet solution $\mst^*$, so we stop exploring it. \textbf{(3) Success:} $\cost(\mst_{\regions, \structures}) < \cost(\mst^*)$ and $\mst_{\regions, \structures}$ is conflict-free, so we set $\mst^* \gets \mst_{\regions, \structures}$ and return $\mst_{\regions, \structures}$.  In this case, it is clear that $\mst_{\regions, \structures} = \solution{\regions}{\structures}$ since Kruskal's algorithm produces a min-cost spanning tree, which is explicitly verified as conflict-free.
\subsubsection{Induction Step}
When $\mst_{\regions, \structures}$ has a conflict, $\bbmst$ forms two or more child branches and returns the cheapest solution among them. Assuming recursive calls of $\bbmst(\regions, \structures)$ return $\solution{\regions}{\structures}$ for the set of regions and structures passed down to them, we prove that the optimal solution to the parent problem must be the cheapest of its children. 

Let $\mst_{\regions,\structures}$ contain structure-structure conflict set $\{\structure_1,\structure_2\}$. Since $\{\structure_1,\structure_2\}$ cannot be in the solution, we know $\solution{\regions}{\structures}$ must be a subset of either $\structures \setminus \structure_1$ or $\structures \setminus \structure_2$, since $\powerset(\structures \setminus \structure_1) \bigcup \powerset(\structures \setminus \structure_2)$ is the set of all subsets of $\structures$ that do not contain $\{\structure_1, \structure_2\}$. Therefore, $\solution{\regions}{\structures}$ must be the cheaper of $\solution{\regions}{\structures
\setminus \structure_1}$ and $\solution{\regions}{\structures \setminus \structure_2}$.

By similar reasoning, let $\mst_{\regions,\structures}$ contain a region-structure conflict set $\{ \region_c, \structures_c \}$. We know that either $\solution{\regions}{\structures} = \solution{\regions}{\structures \setminus \structure_c}$ for some $\structure_c \in \structures_c$, or $\solution{\regions}{\structures} = \solution{\regions_{split}}{\structures}$, since in each of these cases a single member of the conflict set has been removed.  $\bbmst$ returns the cheapest of these options by comparing $\mst_{edges}$ and $\mst_{split}$, returned by $\branchEdges$ and $\branchRegion$.
\end{proof}

\begin{theorem} \label{thm:main-result}
A combination of $\bbmst$ and $\waterfall$ generates the minimum cost set of structures that makes the environment fully traversable.
\end{theorem}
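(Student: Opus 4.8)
The plan is to glue together the four lemmas already proved. The two $\waterfall$ lemmas will show that the candidate set $\structures=\{\waterfall(\buildpoint,\strVector_{\buildpoint}):\buildpoint\in\buildpoints\}$ obtained by running $\waterfall$ at every build point is exactly the right input to hand to $\bbmst$, and Lemmas~\ref{lem:bounding} and~\ref{lem:anytime} will show that $\bbmst$ then solves struct-MST exactly on that input; so the theorem is essentially composition plus a termination check.

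First I would reduce the search. A minimum-cost conflict-free set of structures making $\regionGraph$ connected may, without loss of generality, be taken to be a spanning tree: deleting redundant edges keeps $\regionGraph$ connected, cannot create a structure-structure conflict, and cannot create a region-structure conflict (removing a structure only ever un-splits a region), while not increasing cost. By Lemma~\ref{lem:waterfallcomplete} (Completeness), no such minimum-cost spanning tree uses a structure outside $\structures$ --- the exchange argument there shows that a longer structure in a claimed optimum can be replaced by a strictly cheaper sub-structure that still bridges the cut it induces, a contradiction --- and by Lemma~\ref{lem:waterfallopt} (Optimality) each element of $\structures$ is the cheapest valid structure at its build point, so keeping one candidate per build point loses nothing. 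Hence the global optimum over all conceivable structures equals $\solution{\regions}{\structures}$, the min-cost conflict-free spanning tree of $\regionGraph(\regions,\structures)$ (if no conflict-free spanning structure set exists, both are $\emptyset$).

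Next I would invoke Lemma~\ref{lem:anytime} on the top-level call $\bbmst(\regions,\structures)$ made with the initial best-yet solution $\mst^*=\emptyset$, so that $\cost(\mst^*)=\infty$. That lemma says the call returns the cheaper of $\solution{\regions}{\structures}$ and $\mst^*$, namely $\solution{\regions}{\structures}$, and its ``Success'' base case guarantees that whatever is returned is a genuine spanning tree of $\regionGraph$ that has been explicitly verified conflict-free. A spanning tree of $\regionGraph(\regions,\structures)$ connects every pair of regions, so the chosen structures make the environment fully traversable, and conflict-freeness means they coexist physically and split no region. Combined with the reduction above, the output therefore has minimum cost among all structure sets with this property, which is the claim.

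Finally I would check that $\bbmst$ terminates, so that ``returns'' is meaningful. Take the nonnegative integer potential $\Phi(\regions,\structures)=|\structures|+|V|-|\regions|$, which stays $\ge 0$ because throughout the recursion the regions are pairwise-disjoint nonempty subsets of $V$. Every $\branchEdges$ child deletes one structure while leaving $\regions$ fixed, and every $\branchRegion$ child replaces a region by two or more pieces while leaving $\structures$ fixed, so $\Phi$ strictly decreases along every recursive edge and the recursion tree is finite. I expect the only delicate point to be the first-paragraph reduction when region-structure conflicts are present: one must be sure that the Completeness lemma's notion of ``bridging the induced cut'' is still the correct replacement test after regions may have been split, which is precisely what the $\mst_{split}$ branch of $\bbmst$ and Lemma~\ref{lem:anytime} are built to handle; everything else is bookkeeping on top of the cited lemmas.
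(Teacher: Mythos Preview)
Your proof takes essentially the same route as the paper's: invoke Lemmas~\ref{lem:waterfallopt} and~\ref{lem:waterfallcomplete} to argue that the $\waterfall$ candidate set $\structures$ loses no optimal solution, then apply Lemma~\ref{lem:anytime} to the top-level call $\bbmst(\regions,\structures)$ to conclude it returns $\solution{\regions}{\structures}$. You are in fact more careful than the paper on two points it leaves implicit --- the reduction from arbitrary connecting sets to spanning trees, and termination of $\bbmst$ via the potential $\Phi(\regions,\structures)=|\structures|+|V|-|\regions|$ --- but the overall structure is the same composition of the cited lemmas.
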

\begin{proof}
According to Lemma~\ref{lem:waterfallopt}, the set of structures produced using the $\waterfall$ algorithm contains all the minimal structures needed to connect each region to its neighbors. Lemma~\ref{lem:waterfallcomplete} states that no other structures are needed to produce a minimal spanning set. The $\bbmst$ algorithm searches through all possible subsets of structures and, by Lemma~\ref{lem:anytime}, outputs the best-yet spanning set. Therefore, if the algorithm $\bbmst$ is allowed to run to completion using the full set of structures provided by $\waterfall$, it will produce the minimum cost spanning set.
\end{proof}
\subsection{Runtime}
%$
$\bbmst$ resolves conflicts by recursively exploring each possible conflict-free subset of the conflict set, so in the worst case it will explore an exponential number of branches before finding a solution.  This is to be expected: struct-MST is NP-Hard, and $\bbmst$ solves the problem exactly.

In practice, $\bbmst$ typically prunes many branches and explores a tiny fraction of this space. %, because it can prune many branches by the shortcut condition once a feasible solution is found. 
Additionally, once a feasible solution is found, $\bbmst$ has an \textit{anytime} property: it can be terminated at any time and return the best-yet feasible solution.
To identify that no conflict-free spanning solutions exists, the algorithm must explore each branch until $\regionGraph$ becomes disconnected, which can be very time-consuming. 

It is worth noting that relaxing the optimality requirement of struct-MST would not improve the worst-case runtime, because deciding the existence of a (non-minimal) spanning tree of structures is also $\mathcal{NP}$-hard.
\section{Results}
\label{sec:results}
\subsection{Examples and Experiments}
Our Python implementation can solve for optimal sets of structures for the SMORES-EP robot and building blocks
given a height-field representation of an environment.  The implementation accepts CAD models and 3D maps of real environments as inputs, and we show that it can solve for optimal sets of structures in real-world indoor environments.
\subsubsection{CAD Example -- Table-and-Chair}
Figure \ref{fig:table-chair-optimal} shows optimal solutions for two configurations of the table-and-chair CAD example environment  from Figure~\ref{fig:example-environment}.  In Example \exTableChair~-``Chair Pulled Out'', the optimal solution uses one large ramp from the floor to the chair seat, and two smaller structures connecting the seat to the tabletop and tabletop to chair top.  Notice the position the structure connecting the tabletop to the chair top: the structure crosses over the chair seat, and has been placed at the far left edge of the chair seat to avoid creating a region-structure conflict (which would have required a fourth structure).  In Example B, the chair has been pushed in further, and there is no longer enough space to place structures on the chair seat.  The algorithm is forced to select a more expensive solution using three large ramps from the floor.
\begin{figure}
\centering        
  \renewcommand{\arraystretch}{0.2}
  \footnotesize
  \begin{tabular}{|c|c|}
    \hline \\
    \multicolumn{2}{|c|}{\exTableChair~Chair Pulled Out} \\
    \multicolumn{2}{|c|}{ 
    \includegraphics[width=0.35\columnwidth]{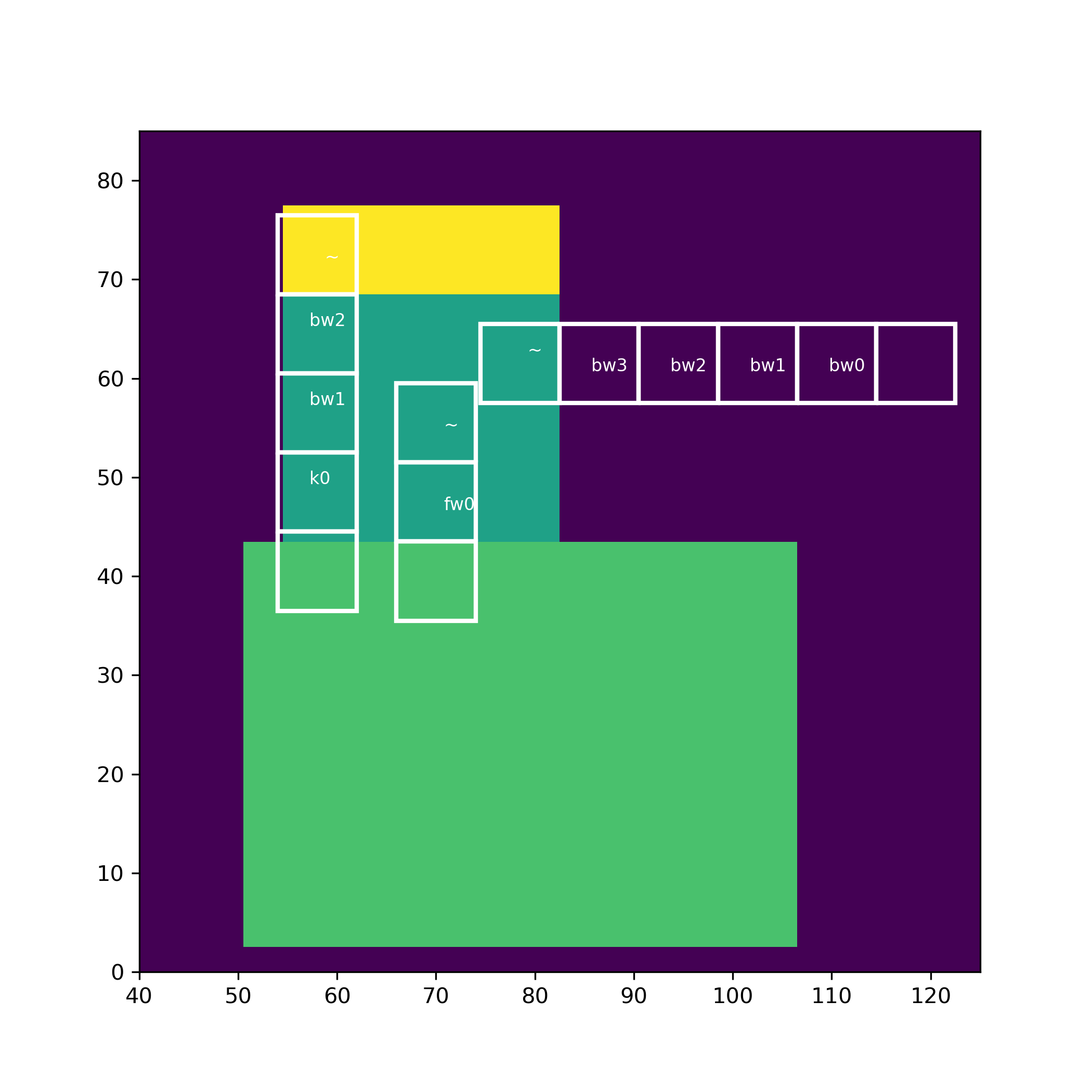} 
    \includegraphics[width=0.35\columnwidth]{images/tc-optimal-3d-2.png} } \\
    Height Field & 3D Rendering \\
    \hline \\
    \exPushedIn~Chair Pushed In & \exCheckerboard~Random checkerboard \\ 
    \includegraphics[width=0.35\columnwidth]{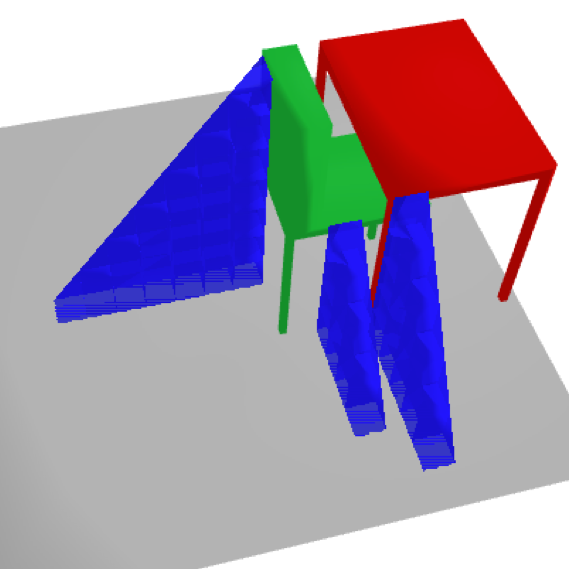} & 
    \includegraphics[width=0.35\columnwidth]{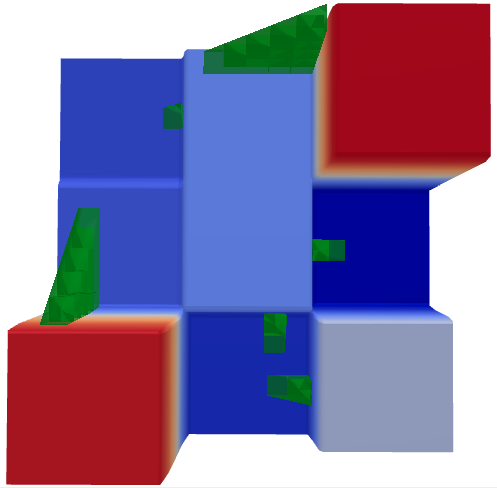} \\
    \hline
  \end{tabular}
  \caption{Optimal solutions in simulated environments.}%  Example (A): the chair is pulled out far enough to allow small ramps to be built from the tabletop to the chair top, and from the chair seat to the tabletop.  (B): With the chair pushed in further, there is insufficient space to build on the chair seat, and the algorithm is forced to select a more expensive solution using three large ramps. (C) Solution to a $3 \times 3$ random checkerboard environment. Each square has a randomly selected height.}
  \label{fig:table-chair-optimal}
  \vspace{-0.5cm}
\end{figure}
\subsubsection{Real-World Experiments}
Figure~\ref{fig:real-world} shows solutions scanned from two indoor environments.  The video accompanying this paper shows the SMORES-EP robot moving through these environments using structures placed in the locations selected by the algorithm.% The video is also available online\footnote{\url{https://youtu.be/B9WM557NP44}}.

To solve for structures in these environments,  3D occupancy grid maps were created with the Octomap library \cite{hornung2013octomap} using point cloud data scanned with a Microsoft Kinect RGB-D sensor.  A similar RGB-D sensor can be carried by SMORES-EP to autonomously map its environment \cite{JD:GJ:TT:MY:HK:MC:18}.  Occupancy grids were converted to height fields (2D grayscale images), which were smoothed (median filter, window of 3 pixels) and segmented (K-means, K=150) to reduce noise before running the algorithm to generate structures.  Real environments often contain regions that are too small for the robot to occupy them, even if it could access them (for example, the arms and back of the chair in Figure~\ref{fig:real-world}).  To account for this, we test whether each region can fit an inscribed 8cm square (the size of one SMORES-EP robot module) anywhere within it, and remove small regions from the set of regions $\regions$ before running the algorithm.

In Figure~\ref{fig:real-world}, Environment \exOffice~consists of a round table, a stack of magazines, an office chair, and a storage container.  The algorithm determines that adding one 4-block high ramp structure and three single wedges to this environment will allow the robot to reach every surface large enough to support it.  Environment \exStairs~is a staircase.  The algorithm determines that six 2-block high ramps can be introduced to make it globally traversable by the robot. 
The location of each ramp on a given step is effectively random, because solutions that use the same set of structures in different locations have equal cost.  
\begin{figure}
\centering
  \renewcommand{\arraystretch}{0.2}
  \newcommand{\fh}{1.2in}
  \footnotesize
  \begin{tabular}{|c c|}
    \hline \\
    \multicolumn{2}{|c|}{\exOffice~Table, Chair, and Box} \\
    \includegraphics[height=\fh]{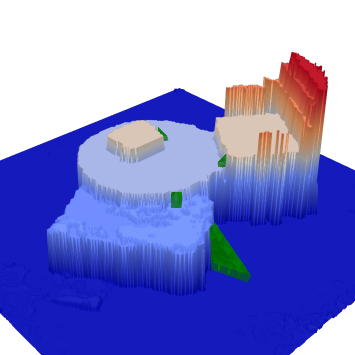} &
    \includegraphics[height=\fh]{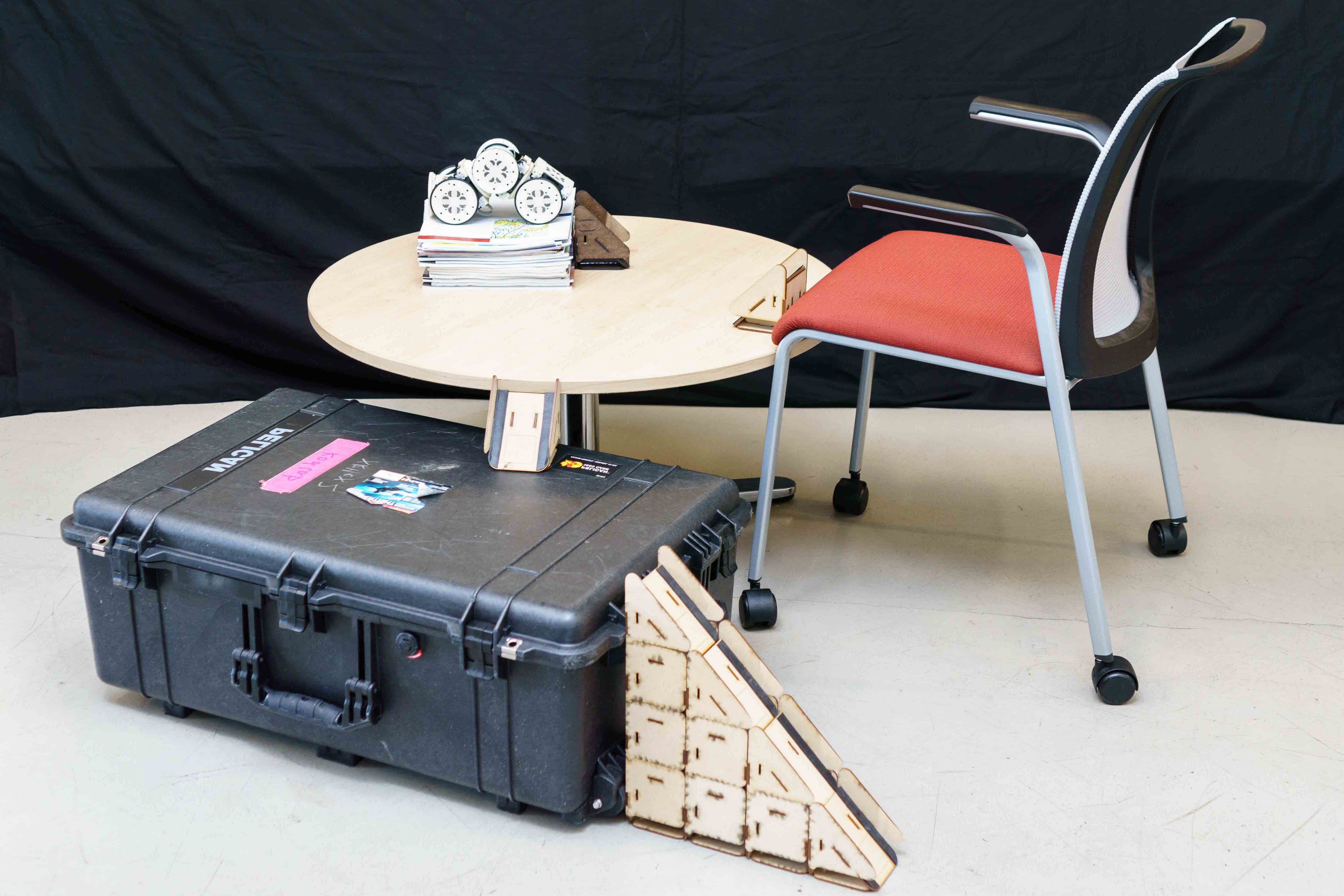} \\
    %Algorithm Solution & Real-World Environment\\
    \hline \\
    \multicolumn{2}{|c|}{\exStairs~Stairs} \\
    \includegraphics[height=\fh]{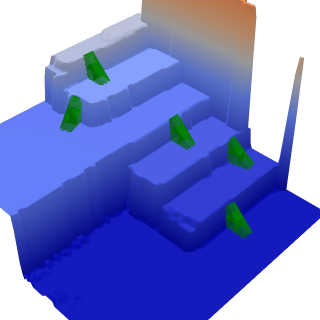} &
    \includegraphics[height=\fh]{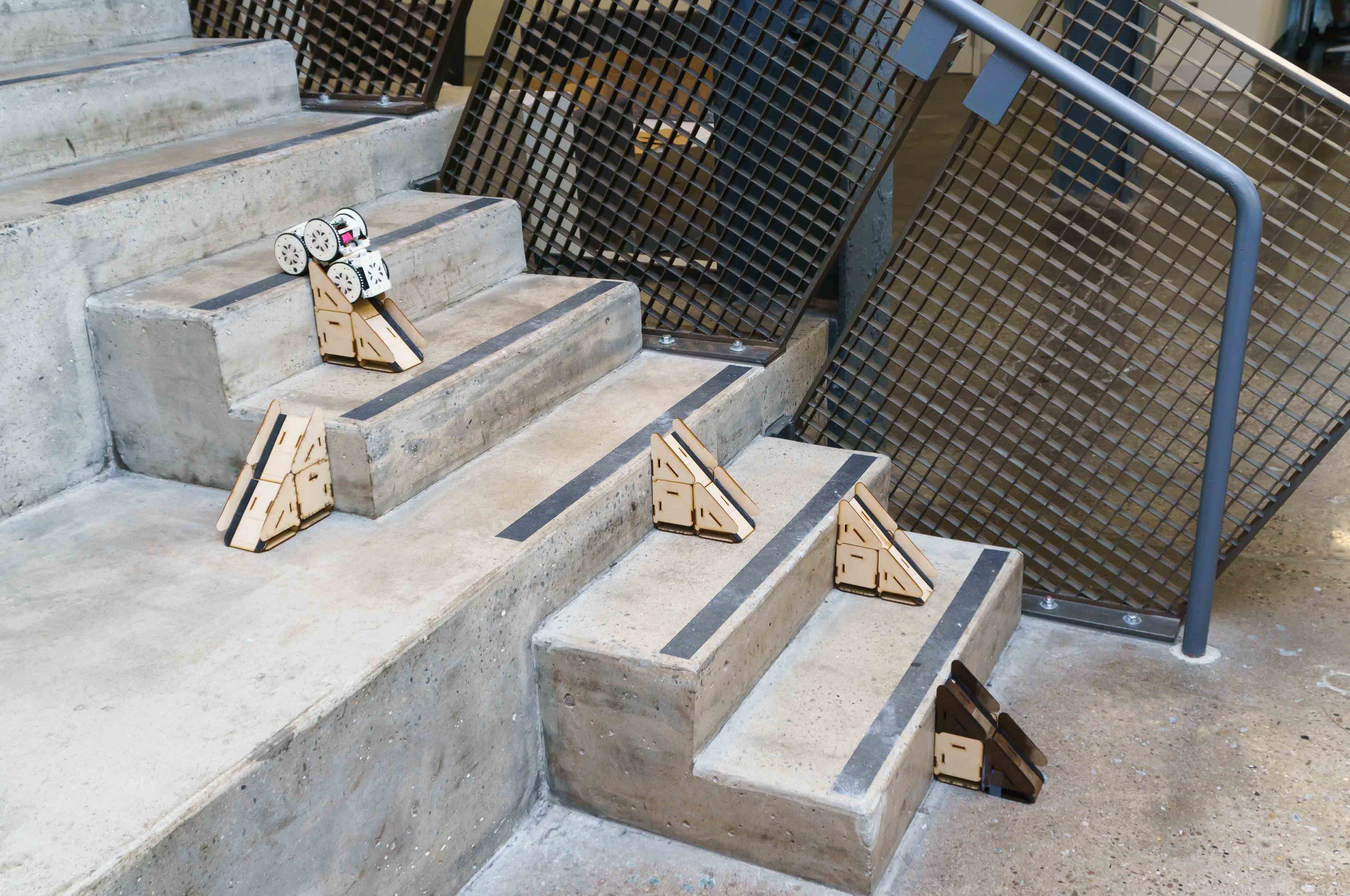}\\
    %Algorithm Solution & Real-World Environment\\
    \hline
  \end{tabular}
  \caption{Real-world environments, and algorithm solutions generated from 3D map data taken with a Kinect sensor.}
  \label{fig:real-world}
  \vspace{-0.5cm}
\end{figure}
\subsection{Runtime Performance}
%$ 

Table~\ref{tab:runtime} shows metrics for the example environments from Figs~\ref{fig:table-chair-optimal} and \ref{fig:real-world}.
The algorithm generates thousands of structures and solves problems with 4-9 regions in minutes. In many cases, the number of potential conflicts (which determines the worst-case runtime) is in the tens of thousands, but the algorithm explores a tiny fraction of them (less than thirty).  The Conflict Pair Fraction (CPF) for each problem is the percentage of pairs of structures which conflict, and provides a measure of problem difficulty. In all examples, a significant fraction of the total time required to reach a solution is spent preprocessing the world (e.g. generating the initial grid graph and identifying cliffs edges and steep areas).

Runtime performance was profiled by generating and solving random environments similar to environment \exCheckerboard~(Fig~\ref{fig:table-chair-optimal}). Each environment is a $3\times3$ checkerboard with $6 \lblock$-wide squares with randomly selected heights between zero and $3 \lblock$ (in one-pixel increments).  Trials were terminated after a timeout of 10 minutes.
Of 1118 total trials, 981 found a solution (conflict-free MST), and 137 found that no solution exists.  834 solutions were found in one try, and 16 no-solutions were found in one try.  17 trials timed out, of which 11 found no solution and 6 found a feasible solution.  On average, each problem generated 363 structures with 10184 conflicts, and CPF of 7.83\%.

% Fig.~\ref{fig:time-histogram} shows a log-scale histogram of time required to solve the environments that did not time out. The vast majority of trials completed in under ten seconds.
% Fig.~\ref{fig:cpf-tries} shows the average number of branches explored as a function of CPF for the random environments.  Environments were grouped into 10 bins by CPF value; each data point represents the average values for a bin.  We see an increasing trend in number of branches explored with increasing CPF value. 
%
\begin{figure}
  \includegraphics[width=\columnwidth]{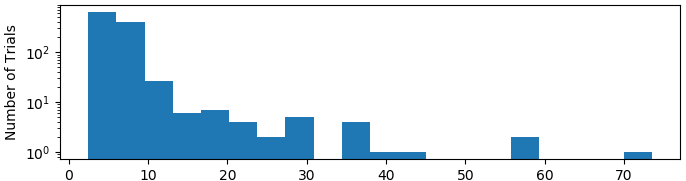}
  \vspace{-0.5cm}
  \caption{Log-histogram of solution times for 1118 random environments.  In addition to the data shown, 17 environments timed out after 10 minutes.}
  \label{fig:time-histogram}
  \vspace{-0.3cm}
\end{figure}
%
% \begin{figure}
%   \includegraphics[width=\columnwidth]{}
%   \caption{Number of branches explored as a function of CPF for 1118 random environments}
%   \label{fig:cpf-tries}
% \end{figure}
%
\begin{table}
\footnotesize
\begin{center}
\begin{tabular}{lrrrrr}
\hline
\bf{Environment} & \bf{\exTableChair} & \bf{\exPushedIn} & \bf{\exCheckerboard} & \bf{\exOffice} & \bf{\exStairs}\\
\hline
\bf{Size (square, px)} & 151 & 151 & 72 & 240 & 115\\
\hline
\bf{Structures} & 634 & 148 & 474 & 3302 & 1275\\
\hline
\bf{Regions} & 4 & 4 & 9 & 5 & 6\\
\hline
\bf{Branches Explored} & 33 & 1 & 13 & 1 & 1\\
\hline
\bf{Potential Conflicts} & 16196 & 476 & 12974 & 33908 & 18206\\
\hline
\bf{CPF} & 14.0\% & 16.3\% & 7.6\% & 3.3\% & 4.7\%\\
\hline
\bf{Structure Time (s)} & 9.834 & 2.25 & 3.188 & 61.824 & 9.507\\
\hline
\bf{BB-MST Time (s)} & 6.642 & 0.024 & 9.588 & 0.552 & 0.14\\
\hline
\bf{Total Time (s)} & 20.604 & 6.329 & 13.828 & 86.848 & 15.708\\
\hline
\bf{Blocks used} & 17 & 46 & 40 & 13 & 15\\
\hline
\end{tabular}
\end{center}

%\captionsetup{format=plain,labelsep=period}%
\vspace{-0.6cm}
\caption{Runtime Performance\vspace{-0.6cm}}
\label{tab:runtime}
\end{table}
%scussion}
\vspace{-0.2cm}
\section{Discussion and Future Work}
\label{sec:discussion}
\vspace{-0.1cm}
This paper presents an algorithm to find min-cost spanning sets of structures allowing a robot to reach every surface of an environment.  Finding optimal (as opposed to feasible) solutions to these problems is important - building larger structures takes more time, and in real scenarios the number of available building blocks is always limited.  For example, the solution in environment \exTableChair~requires 17 blocks, whereas solution \exPushedIn~(which is also a feasible solution for \exTableChair) requires 46. % It is easy to imagine a scenario where the robot needs to traverse environment \exTableChair~ but only has 20 blocks available, and so needs to find the optimal solution to complete its task. 
% * <crsung@seas.upenn.edu> 2018-08-31T18:40:54.335Z:
%
% ^.

Given a build point and direction vector, $\waterfall$ generates an optimal structure in linear time, allowing thousands of candidate structures to be generated in under a minute.  Given a set of candidate structures, $\bbmst$ will always eventually find a conflict-free solution to a struct-MST problem, if one exists.  Because struct-MST is NP-Hard, any algorithm that solves it will have exponential worst-case complexity, and for some problems (especially when no solution exists) $\bbmst$ will run for an impractically long time before returning.  However, in typical problems, $\bbmst$ explores the solution space efficiently and returns optimal solutions in a few seconds. In many realistic problems, the number of potential conflicts is relatively small compared to the number of potentially useful combinations of structures. For example, the \exOffice~ and \exStairs~ have low CPF values, and in both cases the first MST generated had no conflicts. 

Some tasks might require a robot to access only a subset of the regions in an environment. The framework introduced in this paper could easily be extended  to solve for min-cost \textit{paths} of structures (connecting a pair of regions) by calling Dijkstra's algorithm in place of Kruskal's algorithm in $\bbmst$.  With slight modification, the framework could also solve for approximately-optimal \textit{Steiner trees} of structures, to make a selected subset of regions accessible.  Solving for min-cost Steiner trees in graphs is $\mathcal{NP}$-hard, but poly-time algorithms can solve the problem approximately \cite{byrka2010improved}.  Calling such an algorithm in place of Kruskal's algorithm would allow $\bbmst$ to compute Steiner trees, as long as the approximation factor is taken into account when comparing solution costs in the shortcut-return case.

Future work includes taking robot path planning into account when selecting structure locations. For example, in the stairs environment in Figure~\ref{fig:real-world}, placing the ramps in a line would allow the robot to move more efficiently through the environment.  Optimization of structure positions could be performed as a separate post-processing step after the algorithm selects structures, or information about structure position could be directly incorporated into the cost function for evaluating solutions.  
\vspace{-0.2cm}
\section{Conclusion}
\vspace{-0.1cm}
%\label{sec:conclusion}
This paper presents an complete, optimal algorithm to generate sets of structures that could be added to an environment to make it globally accessible to a robot.  In experiments using real and simulated environments, we demonstrated that the algorithm can synthesize optimal sets of structures with practical speed in realistic settings.  This opens up the possibility for a structure-building robot to enter a new environment and quickly determine what structures should be built to enable free movement, enabling tasks that would otherwise be very difficult or impossible for the robot.
\vspace{-0.12cm}
\bibliographystyle{IEEEtran}
\bibliography{references}

% Generated by IEEEtran.bst, version: 1.14 (2015/08/26)
\begin{thebibliography}{10}
\providecommand{\url}[1]{#1}
\csname url@samestyle\endcsname
\providecommand{\newblock}{\relax}
\providecommand{\bibinfo}[2]{#2}
\providecommand{\BIBentrySTDinterwordspacing}{\spaceskip=0pt\relax}
\providecommand{\BIBentryALTinterwordstretchfactor}{4}
\providecommand{\BIBentryALTinterwordspacing}{\spaceskip=\fontdimen2\font plus
\BIBentryALTinterwordstretchfactor\fontdimen3\font minus
  \fontdimen4\font\relax}
\providecommand{\BIBforeignlanguage}[2]{{%
\expandafter\ifx\csname l@#1\endcsname\relax
\typeout{** WARNING: IEEEtran.bst: No hyphenation pattern has been}%
\typeout{** loaded for the language `#1'. Using the pattern for}%
\typeout{** the default language instead.}%
\else
\language=\csname l@#1\endcsname
\fi
#2}}
\providecommand{\BIBdecl}{\relax}
\BIBdecl

\bibitem{petersen2011termes}
K.~Petersen, R.~Nagpal, and J.~Werfel, ``Termes: An autonomous robotic system
  for three-dimensional collective construction,'' in \emph{Robotics: Science
  and Systems}, 2011.

\bibitem{Werfel2007}
J.~Werfel, D.~Ingber, and R.~Nagpal, ``Collective construction of
  environmentally-adaptive structures,'' \emph{IEEE IROS}, 2007.

\bibitem{Seo2013}
J.~Seo, M.~Yim, and V.~Kumar, ``Assembly planning for planar structures of a
  brick wall pattern with rectangular modular robots,'' \emph{CASE}, 2013.

\bibitem{melenbrink2018local}
N.~Melenbrink and J.~Werfel, ``Local force cues for strength and stability in a
  distributed robotic construction system,'' \emph{Swarm Intelligence}, 2018.

\bibitem{terada2004automatic}
Y.~Terada and S.~Murata, ``Automatic assembly system for a large-scale modular
  structure-hardware design of module and assembler robot,'' in \emph{IEEE
  IROS}, 2004.

\bibitem{romanishin20153d}
J.~W. Romanishin, K.~Gilpin, S.~Claici, and D.~Rus, ``3d m-blocks:
  Self-reconfiguring robots capable of locomotion via pivoting in three
  dimensions,'' in \emph{IEEE ICRA}, 2015.

\bibitem{sung2015reconfiguration}
C.~Sung, J.~Bern, J.~Romanishin, and D.~Rus, ``Reconfiguration planning for
  pivoting cube modular robots,'' in \emph{IEEE ICRA}, 2015.

\bibitem{knepper2013ikeabot}
R.~A. Knepper, T.~Layton, J.~Romanishin, and D.~Rus, ``Ikeabot: An autonomous
  multi-robot coordinated furniture assembly system,'' in \emph{IEEE ICRA},
  2013.

\bibitem{dogar2012planning}
M.~Dogar and S.~Srinivasa, ``A planning framework for non-prehensile
  manipulation under clutter and uncertainty,'' \emph{Autonomous Robots}, 2012.

\bibitem{napp2014distributed}
N.~Napp and R.~Nagpal, ``Distributed amorphous ramp construction in
  unstructured environments,'' \emph{Robotica}, vol.~32, no.~2, 2014.

\bibitem{saboia2018autonomous}
M.~Saboia, V.~Thangavelu, W.~Gosrich, and N.~Napp, ``Autonomous adaptive
  modification of unstructured environments,'' in \emph{Robotics: Science and
  Systems}, 2018.

\bibitem{TT:JD:GJ:HK:MC:MY:icra:18}
T.~Tosun$^*$, J.~Daudelin$^*$, G.~Jing$^*$, H.~Kress-Gazit, M.~Campbell, and
  M.~Yim, ``Perception-informed autonomous environment augmentation with
  modular robots,'' in \emph{IEEE ICRA}, 2018.

\bibitem{darmann2011paths}
A.~Darmann, U.~Pferschy, J.~Schauer, and G.~J. Woeginger, ``Paths, trees and
  matchings under disjunctive constraints,'' \emph{Discrete Applied
  Mathematics}, vol. 159, no.~16, 2011.

\bibitem{lawler1966branch}
E.~L. Lawler and D.~E. Wood, ``Branch-and-bound methods: A survey,''
  \emph{Operations research}, vol.~14, no.~4, 1966.

\bibitem{hornung2013octomap}
A.~Hornung \emph{et~al.}, ``Octomap: An efficient probabilistic 3d mapping
  framework based on octrees,'' \emph{Autonomous Robots}, vol.~34, no.~3, 2013.

\bibitem{JD:GJ:TT:MY:HK:MC:18}
J.~Daudelin$^*$, G.~Jing$^*$, T.~Tosun$^*$, M.~Yim, H.~Kress-Gazit, and
  M.~Campbell, ``An integrated system for perception-driven autonomy with
  modular robots,'' \emph{Science Robotics}, 2018.

\bibitem{byrka2010improved}
J.~Byrka, F.~Grandoni, T.~Rothvo{\ss}, and L.~Sanit{\`a}, ``An improved
  lp-based approximation for steiner tree,'' in \emph{ACM Symposium on Theory
  of Computing}, 2010.

\end{thebibliography}

\end{document}